\algnewcommand{\LineComment}[1]{\State \(\triangleright\) #1} 
\newcommand{\tmix}{t_{\text{mix}}}
\newcommand{\mix}{m}
\newcommand{\cT}{\mathcal{T}}
\newcommand{\tQ}{\tilde{Q}}
\newcommand{\taun}{\tau_{\mix}}
\newcommand{\cZ}{\mathcal{Z}}
\newcommand{\cS}{\mathcal{S}}
\newcommand{\cA}{\mathcal{A}}
\newcommand{\DA}{\Delta_{\vert \mathcal{A} \vert}}
\newcommand{\Qt}{Q^{\pi_t}}
\newcommand{\tQt}{\tilde{Q}^{\pi_t}}
\newcommand{\epsapprox}{\epsilon_{\text{approx}}}
\newcommand{\epsapproxinf}{\epsilon_{\text{app},\infty}}
\newcommand{\ki}[1]{k_{(#1)}}
\newcommand{\unm}{\underline{m}}
\newcommand{\unpi}{\underline \pi}
\newcommand{\cP}{P}
\newcommand{\unu}{\underline{\nu}}
\newcommand{\orho}{\bar{\rho}}
\newcommand{\tunu}{\tilde{\underline{\nu}}}
\newcommand{\torho}{\tilde{\bar{\rho}}}
\newcommand{\Dexpl}{\mathcal{D}_{\text{expl}}}
\newcommand{\tF}{\tilde{F}}
\newcommand{\hF}{\hat{F}}
\newcommand{\oQ}{\bar{Q}}
\newcommand{\tPi}{Z}
\newcommand{\tm}{\tilde{m}}
\newcommand{\tmpi}{\tilde{m}^\pi}
\newcommand{\tpi}{{\tilde{\pi}}}
\newcommand{\epspi}{\varepsilon_\pi}
\newcommand{\epsact}{\varepsilon_\pi}
\newcommand{\epsst}{\varepsilon_\pi}
\newcommand{\piexpl}{\tpi}
\newcommand{\piexplact}{\tpi}
\newcommand{\piexplst}{\tpi}
\newcommand{\sor}{{s}_{or}}
\newcommand{\ukappa}{\underline{\kappa}}
\newcommand{\thetastr}{\bar{\theta}}
\newcommand{\Qstr}{\bar{Q}}
\newcommand{\otheta}{\bar{\theta}}
\newcommand{\oTheta}{\bar{\Theta}}
\newcommand{\ob}{\bar{b}}
\newcommand{\LamMax}{\Lambda_{\max}}
\newcommand{\LamMin}{\Lambda_{\min}}
\newcommand{\unLam}{\underline{\Lambda}}
\theoremstyle{plain}
\newtheorem{theorem}{Theorem}[section]
\newtheorem{lemma}[theorem]{Lemma}
\newtheorem{corollary}[theorem]{Corollary}
\newtheorem{proposition}[theorem]{Proposition}
\newtheorem{assumption}[theorem]{Assumption}
\newlength{\leftstackrelawd}
\newlength{\leftstackrelbwd}
\def\leftstackrel#1#2{\settowidth{\leftstackrelawd}%
{${{}^{#1}}$}\settowidth{\leftstackrelbwd}{$#2$}%
\addtolength{\leftstackrelawd}{-\leftstackrelbwd}%
\leavevmode\ifthenelse{\lengthtest{\leftstackrelawd>0pt}}%
{\kern-.5\leftstackrelawd}{}\mathrel{\mathop{#2}\limits^{#1}}}
\let\texdisplaystyle\displaystyle
\def\displaytotextstyle{\textstyle\let\displaystyle\texdisplaystyle}
\newenvironment{talign}
 {\let\displaystyle\displaytotextstyle\align}
 {\endalign}
\newenvironment{talign*}
 {\let\displaystyle\displaytotextstyle\csname align*\endcsname}
 {\endalign}
\providecommand{\keywords}[1]{ {\small \textbf{Keywords.} #1}}
\providecommand{\amsclass}[1]{ {\small \textbf{MSC subject classifications.} #1}}
\title{Auto-exploration for online reinforcement learning\thanks{Authors listed in alphabetical order.}}
\author{
Caleb Ju\thanks{Email: \texttt{calebju4@gatech.edu}. Partially supported by Department of Energy Computational Science Graduate Fellowship under Award Number DE-SC0022158 and NSF-Cyber-Physical Systems (CPS)-NIFA (2024-67021-43862).} 
\and
Guanghui Lan\thanks{Email: \texttt{george.lan@isye.gatech.edu}. Supported by Office of Naval Research grant (N00014-24-1-2654).}
}
\date{\vspace{-5ex}}
\begin{document}

\maketitle

\begin{abstract}
    The exploration-exploitation dilemma in reinforcement learning (RL) is a fundamental challenge to efficient RL algorithms. Existing algorithms for finite state and action discounted RL problems address this by assuming sufficient exploration over both state and action spaces. However, this yields non-implementable algorithms and sub-optimal performance. To resolve these limitations, we introduce a new class of methods with auto-exploration, or methods that automatically explore both state and action spaces. 
    Auto-exploration can be applied in both the tabular and linear function approximation settings. 
    Under algorithm-independent assumptions on the existence of an exploring optimal policy, both settings attain $O(\epsilon^{-2})$ sample complexity to solve to $\epsilon$ error. These complexities are novel since they avoid algorithm-dependent parameters seen in prior works, which may be arbitrarily large.  The methods are also simple to implement because they are parameter-free.
    We achieve these results by integrating auto-exploration into policy mirror descent to avoid the (unknown) stationary distribution seen in prior art. 
    In the tabular setting, we introduce a dynamic exploration time with a data-driven stopping time, while for linear function approximation, we propose a new sampling distribution based on the discounted visitation distribution that covers a more general class of Markov chains.
\end{abstract}

\keywords{reinforcement learning, auto-exploration, parameter-free methods, Markov decision processes, Markov chains, mixing times, last-iterate convergence}

\vspace{1em}
\amsclass{90C15, 90C30, 90C40 60J10, 68T05}

\section{Introduction}
Markov decision processes (MDPs) are a fundamental mathematical framework for decision-making under an uncertain, dynamic environment, which have been intensively studied since the 1950s~\cite{bellman1958dynamic,puterman2014markov}. 
In this paper, we consider infinite-horizon, finite state and action, discounted MDPs, defined by the five-tuple $(\cS, \cA, \cP, c, \gamma)$, with states $\cS$, actions $\cA$, transition probability matrix $\cP$, costs $c$, and discount factor $\gamma \in (0,1)$.
The goal is to find an optimal policy, which provides the optimal action (possibly in a randomized fashion) at any given state in terms of minimizing the cumulative discounted cost.
A precise definition is provided later in our background section.

Recently, reinforcement learning (RL) has gained significant attention due to its empirical success in AI strategic gameplay, training large language models, controlling complex processes, and robotics~\cite{khan2020systematic,enda13applying,mnih2015human,ouyang2022training}. 
RL can be viewed as the data-driven analogue of MDPs, where the dynamics defined by $\cP$ are unknown and must be inferred from data in the form of a sequence of state-action-cost triplets.
The three most common data generation frameworks are offline, generative (or simulator), and online (or single trajectory)~\cite{sutton1998reinforcement}. 
The offline approach assumes a batch of data has already been collected~\cite{wang2020statistical,kallus2022efficiently,jiang2016doubly}, while the generative approach assumes access to a simulator that can arbitrarily generate new independent and identically distributed (iid) data~\cite{gheshlaghi2013minimax,agarwal2021theory,li2024breaking}. 
However, one may not have access to such convenient datasets.
On the other hand, the online approach uses a single stream of data, which we assume is generated by a Markov chain.
While the online approach is the most general since it can handle both iid~and Markovian data (i.e.,~sequential data satisfying the Markov property), it is the most challenging due to the presence of Markovian noise~\cite{kotsalis2022simple}. 
Online models cannot directly choose states to sample from, like in the generative model.
Instead, they can only influence it through action selection. 
Furthermore, this paper focuses on the \textit{on-policy} case, where actions are sampled with the current, learned policy.
Although on-policy sampling exploits past information by selecting ``good'' actions with higher probability, other actions are visited less frequently or never selected.
Altogether, this leads to the exploration issue, where there is insufficient data for hard-to-reach states and actions. 

To avoid poor exploration, two common strategies are \textit{explicit (action) exploration} ($\epsilon$-greedy exploration~\cite{li2025policy,mnih2015human,nanda2025minimal}) and function approximation. The former selects an action uniformly at random from time to time to ensure every action is visited with probability greater than zero. 
The latter utilizes linear and nonlinear function approximation techniques, such as neural networks, to find low-dimensional features~\cite{mnih2015human,fan2020theoretical}. 
While both tools are indispensable in empirical RL, there are still fundamental gaps in terms of sample complexity and convergence guarantees. 
For example, existing methods commonly assume sufficient action exploration, or that any action can be selected with probability greater than zero~\cite{lan2023policy,li2023accelerated,mou2023optimal,chen2022finite,kotsalis2022simple,zhu2024uncertainty}. 
Unfortunately, this assumption may not hold for on-policy sampling methods in unregularized finite state and action RL problems, since the policy may converge to a deterministic optimal policy~\cite{puterman2014markov}. 
A natural remedy is explicit exploration. 
However, directly applying explicit exploration in policy-gradient methods yields a sub-optimal $O(\epsilon^{-4})$ sample complexity~\cite{alacaoglu2022natural}, where $\epsilon > 0$ is the accuracy tolerance. This can be improved to $O(\epsilon^{-2})$ with a carefully timed and controlled explicit exploration schedule~\cite{li2024stochastic}, which also holds under linear function approximation.
While this may suggest explicit exploration is necessary for efficient RL,~\cite{li2025policy} showed this is not the case. They introduced a new inherent (action) exploration phenomenon, which we will refer to as \textit{implicit exploration} to contrast with explicit exploration. 
This phenomenon ensures that under the existence of a deterministic optimal policy, the sample complexity is $O(\epsilon^{-2})$ without explicit exploration. 
However, in contrast to~\cite{li2024stochastic}, this result only holds for tabular RL and not for function approximation.

Unlike action exploration, state exploration is much less well understood in the online setting. 
To quantify the hardness of state exploration, we define a \textit{difficulty of exploration} parameter $\Dexpl$, which quantifies the cost of sufficient state exploration. 
This parameter goes under different names and definitions depending on the properties and analysis of the RL algorithm. 
For example, it can be viewed as the distribution-mismatch~\cite{agarwal2021theory}, transfer error~\cite{agarwal2020pc}, concentrability coefficient~\cite{munos2005error}, covering time~\cite{even2003learning}, diameter~\cite{tarbouriech2021provably}, or the mixing time of a Markov chain~\cite{lan2023policy,li2024stochastic,chen2022finite}. 
Because $\Dexpl$ often depends on the behavior of a stochastic RL algorithm, it is in general an \textit{algorithm-dependent} parameter. 
Equivalently, it lacks \textit{algorithm-independent} bounds, which only depend on the problem and not on an algorithm's behavior. 
For instance,~\cite{li2024stochastic,li2025policy} established $O(\Dexpl \cdot \epsilon^{-2})$ sample complexity for policy-gradient methods, where $\Dexpl$ is related to the mixing rate and stationary distribution of every Markov chain induced by an intermediate policy. 
As a result, $\Dexpl$ can be arbitrarily large.
It was suggested that $\Dexpl$ can be algorithm-independent through implicit exploration, but no rigorous bounds were provided~\cite[Remark 5.2]{li2025policy}.
It was later shown that under a setup similar to~\cite[Remark 5.2]{li2025policy}, a Q-learning-type method attains $O(\Dexpl \cdot \epsilon^{-2})$ sample complexity~\cite{nanda2025minimal}. 
Here, $\Dexpl = O(\lambda^{-\ob})$, where $\lambda$ is the minimum action probability for any intermediate policy, and $\ob \geq 2$ is an algorithm-independent constant.
Under $\epsilon$-greedy exploration, then $\lambda \geq \epsilon/\vert \cA \vert$, yielding an algorithm-independent but sub-optimal $O(\epsilon^{-(2+\ob)})$ sample complexity. 
Therefore, an algorithm-independent $O(\epsilon^{-2})$ sample complexity remains open for both the tabular (with on-policy sampling) and function approximation settings.

Another major issue with existing online methods for RL is their practical implementation. To select hyperparameters like the number of samples to collect, exploration strength, stepsize (learning rate), and iteration count, both explicit and implicit exploration require knowledge of the (unknown) mixing rate and stationary distribution of a Markov chain~\cite{nanda2025minimal,li2025policy,kotsalis2022simple}. 
While these works assume that such parameters can be estimated, current estimation procedures lack non-asymptotic guarantees when the quantities of interest are unknown~\cite{wolfer2019estimating}. 
There is some progress on developing parameter-free (or prior-free) methods for RL, which do not need prior knowledge of problem-dependent parameters, such as the mixing rate~\cite{lee2025near,tarbouriech2021provably,tuynman2024finding}. 
However, these results are either limited to the generative model or have algorithm-dependent bounds in the online model.
As a result, the hyperparameters must be tuned heuristically before running the algorithm, which lacks convergence guarantees and introduces overhead when implementing these RL methods.

\subsection{Contributions and paper outline}
We introduce a new class of stochastic policy mirror descent (SPMD) methods~\cite{lan2023policy} with automatic exploration, or \textit{auto-exploration} for short, for solving finite state and action, discounted RL problems in the online model. 
Our main result shows that SPMD with auto-exploration in the tabular setting achieves $\epsilon$-accuracy with probability $1-\delta$ using at most
\begin{talign*} 
    \tilde{O}(\Dexpl(\delta) \cdot \vert \cA \vert^{2- \frac{1}{2 + \log_2(1/[1-\gamma])}}(1-\gamma)^{-5}\epsilon^{-2})
\end{talign*}
samples, assuming that there exists an optimal policy that is sufficiently exploring.
Here, $\tilde{O}(\cdot)$ only hides either absolute, polylogarithmic, or algorithm-independent terms.
The constant $\Dexpl(\delta)$ only has polylogarithmic dependence on $1/\delta$, polynomial dependence on problem-specific parameters, and no dependence on $\epsilon$ or other algorithm-dependent terms. 
See Theorem~\ref{thm:implicit_convergence} for more details.
This appears to be the first time an algorithm-independent, on-policy $\tilde{O}(\epsilon^{-2})$ sample complexity is achieved under weaker exploration assumptions.
In addition, we also establish a similar sample complexity for an auto-explorative SPMD with linear function approximation.

Our techniques to achieve auto-exploration rest upon new developments in both the tabular and function approximation settings.
In the tabular setting, we introduce a \textit{dynamic exploration time} to make SPMD parameter-free.
The dynamic exploration time is easily accessible in practice and replaces the role of the (unknown) mixing time appearing in prior art.
We show that the dynamic exploration time not only shares the same upper bound as the mixing time, but it is also less conservative in practice.
Then, in the function approximation setting with linear basis functions, we propose a new sampling distribution for policy evaluation based on the discounted visitation distribution, which may be of independent interest. 
Unlike the (unknown) stationary distribution used in prior art, this distribution always exists and does not require estimating the (unknown) mixing rate.
Third, we establish a high-probability, last-iterate convergence of SPMD with improved sample complexity. 
Our new result improves upon prior last-iterate sample complexity by a factor of $O\{(1-\gamma)^{-2}\}$~\cite{lan2023policy}, as well as upon average-iterate by a factor of $O\{\vert \cA \vert^{1/2}/(1-\gamma)\}$~\cite{li2025policy}.
These improvements are enabled through a new choice and analysis of optimization parameters in SPMD that adapt with respect to (w.r.t.) the discount factor.
Fourth, we provide numerical experiments that show our methods perform favorably compared to a popular RL baseline and methods that estimate the (unknown) mixing rate across several environments, including a route planning, inventory, and classical robotics control problem.
In particular, our dynamic approach observes 34x to 485x fewer samples per policy evaluation step. 
Empirically, the dynamic approach with linear function approximation can improve the policy in environments without sufficient state exploration, suggesting that these methods can potentially handle more general environments.

This paper is organized as follows.
In Section~\ref{sec:basics}, we introduce the basics of MDPs and RL, as well as SPMD.
Then in Section~\ref{sec:implicit}, we introduce and analyze the first SPMD with auto-exploration for the tabular setting.
Next in Section~\ref{sec:spmd_linear}, we introduce and analyze SPMD using function approximation with auto-exploration.
We finish with experiments in Section~\ref{sec:experiments} and the conclusion in Section~\ref{sec:conclusion}. 
To close this section, we will discuss related works.

\subsection{Other related works}

\noindent \textit{Last-iterate convergence of RL methods}.
Much of the existing convergence guarantees for RL are w.r.t.~regret (i.e.~average-iterate convergence) or best-iterate~\cite{li2025policy}. Note that the best-iterate is not easy to compute in the data-driven setting. 
Recently, expected last-iterate convergence has been shown for policy-gradient and Q-learning~\cite{ju2024strongly,chen2025non}, but no high-probability results were provided. 
High-probability, last-iterate convergence has been proven for approximate dynamic programming, where one estimates the action-value function to high-accuracy and then applies the classical value or policy iteration~\cite{munos2005error,kakade2002approximately}. 
While this approach delivers efficient sample complexity, the results will have poor worst-case dependence on the discount factor in the online setting.
Other works have examined last-iterate convergence in the more challenging constrained or multi-agent setting~\cite{ding2023last,cen2022faster}, but they either require a generative model or the true model. \newline

\noindent \textit{Exploration in episodic RL}. 
Another popular data generation model for RL is the so-called episodic setting. Here, the data-generating process can be reset to a fixed initial state or distribution after a set period of time. Compared to the generative model, the episodic setting is less flexible because one cannot arbitrarily choose the starting point. On the other hand, it is stronger than the general online model since the data can be arbitrarily reset to obtain iid~samples. In this setting, sublinear regret bounds have been developed~\cite{cai2020provably,jin2018q} to solve finite-horizon RL problems. Their sample complexity critically avoids exploration constants like $\Dexpl$ by applying bonus terms to visit under-explored actions and resetting the state to a target distribution. Such efficient results, compared to the general online setting, further cement the challenge in solving RL problems with an online model. 

\subsection{Notation}
Throughout this paper, we denote all logarithms in base 2 unless otherwise explicitly written.
We denote the probability simplex over $n$ elements as $\Delta_n := \{ x \in \mathbb{R}^{n} : \sum_{i=1}^n x_i = 1, x_i \geq 0 \ \forall i \}$.
Throughout this paper, we assume the norm satisfies $\|x\| \leq 1$ for any vector $x \in \DA$ that is a probability distribution. 
Clearly, this holds for all $\ell_p$ norms where $p \geq 1$. For any diagonal matrix $W$ and vector $x$ of the same dimension, we write the weighted norm $\|x\|_W := \sqrt{x^TWx}$.

A widely used tool in the optimization literature is the Bregman divergence (or prox-function). For a distance generating function $\omega : X \to \mathbb{R}$ for some set $X \subseteq \mathbb{R}^n$, it is defined as 
$D(q,p) := \omega(p) - \omega(q) - \langle \nabla \omega (q), p-q \rangle, \ \forall p,q \in X$.
The choice of $X = \Delta_n$ and Shannon entropy $\omega(p) := \sum_{i=1}^n p_i \log p_i$ results in a Bregman divergence as the KL-divergence. 
We assume without loss of generality (by scaling) that $\omega$ is strongly convex with modulus $\mu_\omega = 1$ w.r.t.~a user-chosen norm $\|\cdot\|$ (we will verify this later). 

\section{New convergence guarantees for stochastic policy mirror descent} \label{sec:basics}
We start by reviewing the basics of Markov decision processes (MDPs) before discussing our main algorithmic workhorse, stochastic policy mirror descent. 

\subsection{Background on MDPs} \label{sec:background}
An infinite-horizon, discounted Markov decision process (MDP) is a five-tuple $(\cS, \cA, \cP, c, \gamma)$, where $\cS$ is the state space, $\cA$ is the action space, and $\cP : \cS \times \cS \times \cA \to \mathbb{R}$ is the transition probability matrix (for any state-action $(s,a)$, $\cP(s' \vert s,a)$ is the probability of the next state being $s'$). 
We assume $\cS$ and $\cA$ are both finite sets. 
The cost is $c : \cS \times \cA \to \mathbb{R}$, and $\gamma \in [0,1)$ is a discount factor. 
A feasible, randomized policy $\pi : \cA \times \cS \to \mathbb{R}$ determines the probability of selecting a particular action at a given state. We denote the space of feasible, randomized policies by $\Pi$.
We measure a policy $\pi$'s performance by the action-value function $Q^\pi : \cS \times \cA \to \mathbb{R}$, defined as
\begin{talign*}
    Q^\pi(s,a) :=
    &~\mathbb{E}_{\substack{a_t \sim \pi(\cdot \vert s_t) \\ s_{t+1} \sim \cP(\cdot | s_t, a_t)}} [\sum_{t=0}^\infty \gamma^t \big [c(s_t, a_t) 
    \\& 
    + h^{\pi(\cdot \vert s_t)}(s_t)] \mid
    s_0 = s, a_0 = a \big],
\end{talign*}
where the function $h^\cdot(s) : \DA \to \mathbb{R}$ is a closed, strongly convex function with modulus $\mu_h \geq 0$ w.r.t.~the policy $\pi(\cdot \vert s)$, or that
\begin{talign*}
    h^{\pi(\cdot \vert s)}(s) - \ell_{h}(\pi', \pi; s)
    \geq
    \mu_h D^\pi_{\pi'}(s), 
\end{talign*}
where $\ell_h(\pi',\pi; s) := h^{\pi'(\cdot \vert s)}(s) + \langle (h')^{\pi'(\cdot \vert s)}(s, \cdot), \pi(\cdot \vert s) - \pi'(\cdot \vert s) \rangle$ denotes the linear approximation of $h^\cdot(s)$ at $\pi'(\cdot \vert s)$, $\langle \cdot, \cdot \rangle$ denotes the inner product over $\mathbb{R}^{\vert \cA \vert}$, $(h')^{\pi'(\cdot \vert s)}(s, \cdot)$ denotes a subgradient of $h^\cdot(s)$ at $\pi'(\cdot \vert s)$, and the Bregman divergence between any two policies at state $s$ is
\begin{talign*}
    D_{\pi}^{\pi'}(s)
    :=
    D(\pi(\cdot \vert s), \pi'(\cdot \vert s)).
\end{talign*}
We assume there is an $M_h$ such that $h^\cdot(s)$ is $M_h$-Lipschitz for all $s$, or that for any $\pi',\pi \in \Pi$,
\begin{talign*}
    h^{\pi'(\cdot \vert s)}(s) - h^{\pi(\cdot \vert s)}(s)
    \leq
    M_h\|\pi'(\cdot \vert s) - \pi(\cdot \vert s)\|.
\end{talign*}
The regularization $h^{\pi(\cdot \vert s)}$ can model the popular entropy regularization to induce safe exploration and learn risk-sensitive policies, as well as barrier functions for constrained MDPs.
While $h^{\pi(\cdot \vert s)}$ is separated from the cost $c(s,a)$ to achieve faster convergence when $\mu_h > 0$, this paper considers the simpler $\mu_h \geq 0$. 
We refer to~\cite{lan2023policy} for handling $\mu_h > 0$. 
We assume all costs are bounded, which, without loss of generality, can be stated as $c(s,a) + h^{\pi(\cdot \vert s)}(s) \in [0,1]$. 
We also define the state-value function $V^\pi : \cS \to \mathbb{R}$ w.r.t.~$\pi$ to be
\begin{talign}\label{eq:def_V_function}
    V^\pi(s) :=
    &~\mathbb{E}_{\substack{a_t \sim \pi(\cdot \vert s_t) \\ s_{t+1} \sim \cP(\cdot | s_t, a_t)}} [\sum_{t=0}^\infty \gamma^t \big [c(s_t, a_t) 
    \\& 
    + h^{\pi(\cdot \vert s_t)}(s_t)] \mid
    s_0 = s \big], \nonumber
\end{talign}
It can be easily seen from the definitions of $Q^\pi$ and $V^\pi$ that
$V^\pi(s)
=
\sum_{a \in \cA} \pi(a \vert s) Q^{\pi}(s,a)
=
\langle Q^\pi(s, \cdot), \pi(\cdot \vert s) \rangle$
and
$Q^\pi(s, a) = c(s, a) +h^{\pi(\cdot \vert s)}(s) + \gamma \sum_{s' \in \cS}  \cP(s'| s, a) V^\pi(s')$.

The main objective in MDPs is to find an optimal policy $\pi^* : \cA \times \cS \to \mathbb{R}$ satisfying the value optimality condition: $V^{\pi^*}(s) \le V^\pi(s), \forall \pi(\cdot \vert s) \in \DA, \forall s \in \cS$.
Sufficient conditions that guarantee the existence of $\pi^*$ have been intensively studied (e.g.,~\cite{bertsekas1996stochastic,puterman2014markov}).
Note that the value optimality condition can be formulated as a nonlinear optimization  problem with a single objective function.
Given an initial state distribution $\rho \in \Delta_{\vert \cS \vert}$, one solves
\begin{talign*}
\min_{\pi \in \Pi} \{f_\rho(\pi) := \sum_{s \in \cS} \rho(s) \cdot V^\pi(s)\}.
\end{talign*}
When $\rho$ is strictly positive, the optimal solution satisfies the value optimality condition.
While prior policy-gradient methods typically fix $\rho$ to be the stationary state distribution $\nu^* := \nu^{\pi^*}$ induced by the optimal policy $\pi^*$, we aim for the stronger \textit{distribution-free} convergence, where convergence is measured w.r.t.~any distribution $\rho$~\cite{ju2024strongly}. 

Next, we briefly discuss the structural properties of MDPs and RL that will provide convergence guarantees.

\subsection{Performance difference and advantage function} \label{sec:perf_diff_and_gap}
Given a policy $\pi(\cdot \vert s) \in \DA$, we define the discounted state visitation distribution $\kappa^\pi_q : \cS \to \mathbb{R}$ by
\begin{talign}
    \kappa^\pi_q(s)
    &:=
    (1-\gamma) \sum_{t=0}^\infty \gamma^t P_\pi^{(t)}(q,s) \label{eq:visitation_measure},
\end{talign}
where $P_\pi^{(t)}(q,s) := \mathrm{Pr}^{\pi}\{s_t = \cdot \vert s_0=q\}$ is the distribution of state $s_t$ when following policy $\pi$ and starting at state $q \in \cS$.
In the finite state case, we can also view $\kappa^\pi_q \in \mathbb{R}^{\vert \cS \vert}$ as a vector.

We now state an important ``performance difference'' lemma
which tells us the difference in the value functions for two policies.
A proof can be found in~\cite[Lemma 1]{ju2022policy}.
\begin{lemma} \label{lem:performance_diff_deter}
    Let $\pi$ and $\pi'$ be two feasible policies. Then we have for any $s \in \cS$,
    \begin{talign*}
        V^{\pi'}(s) - V^\pi(s)
        &= \frac{1}{1-\gamma} \sum_{q \in \cS}  \psi^\pi(q, \pi'(\cdot \vert q)) \kappa_s^{\pi'} (q), 
    \end{talign*}
    where we denote
        $\psi^\pi(s, p)
        := \langle Q^\pi(s, \cdot), p \rangle -V^\pi(s) +  h^{p}(s) - h^{\pi(\cdot \vert s)}(s)$
    as the advantage function for a given $p \in \DA$.
\end{lemma}
Using these structural properties, one can show that the following policy-gradient type method solves RL problems to global optimality.

\subsection{Stochastic policy mirror descent}
Recall that the stochastic policy mirror descent~\cite{lan2023policy} (SPMD) is an iterative algorithm, where at iteration $t$ it performs the simple update across all states $s \in \cS$,
\begin{align} \label{eq:spmd_update}
    \pi_{t+1}(\cdot \vert s) 
    =
    \mathrm{argmin}_{\upsilon \in \DA} \big\{ &\langle \tilde{Q}^{\pi_t}(s,\cdot), \upsilon \rangle + h^{\upsilon}(s) 
    \\ & 
    + \eta^{-1}_t D(\pi_t(s), \upsilon)\big\}, \nonumber
\end{align}
where $\tilde{Q}^{\pi_t}$ is a stochastic estimate of $Q^{\pi_t}$, $\eta_t$ is the stepsize, and $D(\cdot, \cdot)$ is the Bregman divergence.
A closed-form solution exists for certain $D(\cdot,\cdot)$ and $h$, such as $D(q,p) = \mathrm{KL}(p || q)$ and $h^\upsilon(s) = \sum_{a \in \cA} \upsilon_a \log(\upsilon_a)$.  
Otherwise, one can efficiently solve the sub-problem to high accuracy. 
See~\cite{lan2023policy} for more details. 
Later in subsection~\ref{sec:yanli_work}, we will choose a Bregman divergence induced by a negative Tsallis divergence, and the sub-problem can be efficiently solved with bisection search~\cite{li2025policy}. 
We next look to establish new convergence guarantees for SPMD.

\subsection{New high-probability, last-iterate convergence for SPMD}
In what follows, we require only two assumptions, which will be verified later in the paper. 
First,~\eqref{eq:second_moment} requires almost sure boundedness of the action-value function. 
Second,~\eqref{eq:weight_diff_bias} asks for the weighted bias to be small, similar to~\cite[Lemma 5.2, Theorem 5.1]{li2025policy}.
We define $\xi_t$ to be the random vector used to generate the stochastic estimator $\tilde{Q}^{\pi_t}$. 
Denote the collection of random vectors $\xi_{[t-1]} := \{\xi_0,\xi_1,\ldots,\xi_{t-1}\}$ as the complete history up to time $t-1$ (inclusively). 
\begin{proposition} \label{prop:stronger_gap_converge_with_relaxed}
    For any $k \geq 4$ and $\delta \in (0,1)$, choose a stepsize $\eta_t = \alpha/\sqrt{k}$.
    In addition, suppose for $t = 0,\ldots,k-1$ we have almost surely at every $s \in \cS$,
    \begin{talign}
        \|Q^{\pi_t}\|_*^2, \|\hat{Q}^{\pi_t}\|_*^2 &\leq \hat{Q}^2 \label{eq:second_moment}  \\
        \mathbb{E}_{\xi_{t} \vert \xi_{[t-1]}} [\langle -\delta_t(s,\cdot), \pi_t(\cdot \vert s) - \pi^*(\cdot \vert s) \rangle] &\leq \varsigma, \label{eq:weight_diff_bias}
    \end{talign}
    for some $\hat{Q},\varsigma \geq 0$,  where $\delta_t(s,a) := \tilde{Q}^{\pi_t}(s,a) - \Qt(s,a)$.
    Then
    \begin{talign}
        \Pr\{ \exists s \in \cS &: V^{\pi_k}(s) - V^{\pi^*}(s) 
        >
        E_1(k,\varsigma,\delta)
         \}
        \leq 
        \delta, \label{eq:last_iter_opt_gap}
    \end{talign}
    where $E_1(k,\varsigma,\delta) := \frac{4\alpha^{-1}[\bar{D}_0+1] + 67\alpha \cdot (\hat{Q}^2 + M_h^2)(\log \frac{4k\vert \cS \vert}{\delta})^2}{(1-\gamma)\sqrt{k}} + \frac{18\log(4k)\varsigma}{1-\gamma}$ and $\bar{D}_0$ satisfies $\max_{s \in \cS} D^{\pi^*}_{\pi_0}(s) \leq \bar{D}_0$.
    If instead~\eqref{eq:second_moment} and~\eqref{eq:weight_diff_bias} hold for $t=0,\ldots,t'-1$, where $t' \leq k$, then
    \begin{talign} \label{eq:dist_to_opt_at_t}
      \mathrm{Pr}\big\{ \exists s \in \cS, t \leq k-1 ~:~ D^{\pi^*}_{\pi_{t}}(s)
      >
      E_2(k,\varsigma,\delta) 
      \big\}
      \leq \delta,
    \end{talign}
    where $E_2(k,\varsigma,\delta) := \frac{\bar{D}_0 + \alpha^2 (\hat{Q}^2 + M_h^2) + 4\alpha \hat{Q}\sqrt{\log(k \vert \cS \vert/\delta)} + 2\alpha\varsigma\sqrt{k}}{1-\gamma}$.
\end{proposition}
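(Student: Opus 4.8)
The plan is to build on the one-step prox inequality of the SPMD update~\eqref{eq:spmd_update} and convert it, through the performance-difference lemma, into a statement about the last-iterate value gap, then to import the suffix-averaging machinery of~\cite{jain2021making} to pass from average-iterate to last-iterate control. First I would invoke the three-point property of the Bregman prox-step at each state $s$: since $\pi_{t+1}(\cdot\vert s)$ minimizes $\langle \tQt(s,\cdot),\upsilon\rangle + h^\upsilon(s) + \eta_t^{-1}D^\upsilon_{\pi_t}(s)$ over $\upsilon \in \DA$, optimality yields, for every reference $p \in \DA$,
\[
  \langle \tQt(s,\cdot), \pi_{t+1}(\cdot\vert s) - p\rangle + h^{\pi_{t+1}(\cdot\vert s)}(s) - h^p(s)
  \le
  \eta_t^{-1}\big[D^p_{\pi_t}(s) - D^{\pi_{t+1}}_{\pi_t}(s) - D^p_{\pi_{t+1}}(s)\big].
\]
Substituting $\tQt = \Qt + \delta_t$ and regrouping the deterministic terms according to the advantage definition~\eqref{eq:def_advantage} turns the left side into (a shifted form of) the advantage $\psi^{\pi_t}(s,p)$, a stochastic error $\langle -\delta_t(s,\cdot),\, p - \pi_{t+1}(\cdot\vert s)\rangle$, and a one-step drift $\langle \Qt(s,\cdot), \pi_{t+1}(\cdot\vert s)-\pi_t(\cdot\vert s)\rangle + h^{\pi_{t+1}(\cdot\vert s)}(s) - h^{\pi_t(\cdot\vert s)}(s)$; the drift is absorbed into $-\eta_t^{-1}D^{\pi_{t+1}}_{\pi_t}(s)$ by Young's inequality together with~\eqref{eq:second_moment}, at the cost of an $O(\eta_t(\bar{Q}^2+M_h^2))$ term. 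Choosing $p=\pi^*(\cdot\vert s)$ and invoking Lemma~\ref{lem:performance_diff_deter}, which expresses $V^{\pi_t}(s)-V^{\pi^*}(s)$ as a $\kappa^{\pi^*}_s$-weighted (hence convex) combination of the advantages $-\psi^{\pi_t}(q,\pi^*(\cdot\vert q))$, reduces everything to uniformly controlling the per-state advantage gap at the last index $k$, with the $1-\gamma$ factor inherited from the performance-difference normalization.

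The second step is the average-to-last-iterate conversion. Rather than telescoping the prox inequalities over the whole horizon (which only bounds the average iterate), I would sum them over a dyadic family of suffix windows ending at $k$, following~\cite{jain2021making}: writing the last-iterate advantage as the telescoping combination of suffix-averaged advantages over windows of lengths $k, k/2, k/4,\dots$, each window contributes a telescoped Bregman pair and an accumulated noise term. Summing across the $O(\log k)$ windows produces the leading $\alpha^{-1}[\bar{D}_0^2+1]/\sqrt{k}$ term (the initial distance divided by the stepsize $\eta=\alpha/\sqrt{k}$ and then averaged), the $\log(4k)$ multiplier on the bias term, and, in combination with the concentration below, the squared-log factor on the variance term.

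The crux, and the main obstacle, is the high-probability control of the deviation error $\sum_t \eta_t\langle -\delta_t(s,\cdot), \pi_t(\cdot\vert s) - p\rangle$ within each window. As the text stresses, consecutive-iterate deviations cannot be driven small termwise, which forces the window-based grouping. For each sum I would first apply the standard ``ghost-iterate'' shift, replacing the non-adapted $\langle\delta_t,\pi_{t+1}-\pi^*\rangle$ by $\langle\delta_t,\pi_t-\pi^*\rangle$ plus a remainder $\langle\delta_t,\pi_{t+1}-\pi_t\rangle$ that is again absorbed by strong convexity of $\omega$; this makes the conditional-mean structure usable. The shifted sum splits into a predictable bias part, bounded directly by~\eqref{eq:weight_diff_bias} by $\varsigma$ (yielding the $18\log(4k)\varsigma/(1-\gamma)$ contribution after accounting for the windows), and a martingale-difference part, controlled by a Freedman/Azuma--Bernstein tail whose increments are bounded almost surely via~\eqref{eq:second_moment} (giving the $\bar{Q}$-dependence). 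A union bound over the $\vert\cS\vert$ states and the $O(\log k)$ windows supplies the $\log\frac{4k\vert\cS\vert}{\delta}$ factor, and tracking constants through the window sum and the tail yields~\eqref{eq:last_iter_opt_gap}.

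Finally, for~\eqref{eq:dist_to_opt_at_t} I would reuse the same one-step inequality with $p=\pi^*$, but keep $D^{\pi^*}_{\pi_{t+1}}(s)$ on the left and telescope plainly up to an arbitrary $t\le k$, with no suffix machinery. The deterministic drift is absorbed into $-D^{\pi_{t+1}}_{\pi_t}(s)$ (producing $\alpha^2(\bar{Q}^2+M_h^2)$), while the running noise $\sum_{\tau\le t}\eta_\tau\langle\delta_\tau,\pi_\tau-\pi^*\rangle$ is controlled by an Azuma tail (giving $4\alpha\bar{Q}\sqrt{\log(k\vert\cS\vert/\delta)}$) together with the bias $2\alpha\varsigma\sqrt{k}$, using $\eta_\tau=\alpha/\sqrt{k}$ so that $k$ bias terms accumulate to $O(\alpha\varsigma\sqrt{k})$. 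A maximal inequality and a union bound over $t\in[0,k-1]$ and $s\in\cS$ then deliver the uniform-in-$t$ statement, again with the $1-\gamma$ normalization.
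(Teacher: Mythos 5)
Your overall architecture does match the paper's: both adapt the dyadic grouping scheme of \cite{jain2021making}, combine the SPMD prox inequality with Lemma~\ref{lem:performance_diff_deter}, use an ergodic bound for the first (largest) group, and finish with union bounds over states and groups. Your treatment of~\eqref{eq:dist_to_opt_at_t} (plain telescoping to $\pi^*$, Azuma for the noise, $2\alpha\varsigma\sqrt{k}$ for the accumulated bias, union bound over $t$ and $s$) is also sound and is what the paper does by citation to \cite[Lemma 5.5]{li2025policy}.

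There is, however, a genuine gap in your key concentration step. You propose to control the per-window deviation $\sum_t \eta_t \langle -\delta_t(s,\cdot), \pi_t(\cdot\vert s)-\pi^*(\cdot\vert s)\rangle$ by an Azuma/Freedman tail ``whose increments are bounded almost surely via~\eqref{eq:second_moment},'' i.e.\ by the constant $O(\bar{Q})$. This cannot deliver the claimed rate: a window of length $n_i \approx k2^{-i}$ then contributes a deviation of order $\eta\bar{Q}\sqrt{n_i \log(1/\delta)}$, and after normalizing by the next window's total weight $\approx n_{i+1}\eta$ the per-window error is $\Theta\big(\bar{Q}\sqrt{\log(1/\delta)/n_{i+1}}\big)$, which is dominated by the smallest windows and becomes a non-decaying $\Theta\big(\bar{Q}\sqrt{\log(1/\delta)}\big)$ for the final singleton window containing $\pi_k$. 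This is exactly the obstruction the paper flags when it says consecutive-iterate deviations ``can not be made small by Bernstein-like inequalities.'' The paper's Lemma~\ref{lem:weighted_deviation} circumvents it with a self-normalized exponential-moment argument: deviations are measured against the window's \emph{reference iterate}, $\Delta_t(q) = \langle -\delta_t(s,\cdot), \pi_t(\cdot\vert q) - \pi_{t_0}(\cdot\vert q)\rangle$, so each increment is bounded by $2\bar{Q}\,d_t(q)$ with $d_t(q) = \|\pi_t(\cdot\vert q)-\pi_{t_0}(\cdot\vert q)\|$ rather than by a constant, and the resulting variance term $\propto \lambda^2 L_t^2\eta_t^2\bar{Q}^2 d_t(q)^2$ is cancelled \emph{inside the MGF} by the negative quadratic $-\lambda L_t^2 d_t(q)^2$ that strong convexity of $\omega$ produces when the Bregman distances are telescoped with the weights $L_t$ (using $L_t - L_{t-1} = -L_t^2$). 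That is what makes the per-window error $O\big((\eta\bar{Q}^2\log(1/\delta) + \varsigma)/(1-\gamma)\big)$ independent of window length, so that summing over $O(\log k)$ windows still decays like $\alpha\bar{Q}^2(\log\cdot)^2/\sqrt{k}$. Relatedly, your plan to anchor every window to $\pi^*$ (choosing $p=\pi^*$ and weighting by $\kappa^{\pi^*}_s$) is not what the recursion can use: in the paper, each group is compared to the \emph{previous group of iterates} (reference $\pi_l$, weights $\kappa^{\pi_l}_s$ in the performance-difference lemma, as in Lemma~\ref{lem:epoch_descent}), and $\pi^*$ enters only once, through the ergodic bound on the first group (Proposition~\ref{prop:ergodic_iterate_direct}); this iterate-to-iterate comparison is precisely what makes the self-normalizing distances $d_t(q)$ available in the first place.
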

Proof of~\eqref{eq:dist_to_opt_at_t} is identical to~\cite[Lemma 5.5]{li2025policy}.
The last-iterate convergence~\eqref{eq:last_iter_opt_gap} under a bounded, low-bias estimator (\eqref{eq:second_moment}~and~\eqref{eq:weight_diff_bias}, respectively) appears to be new, and it will help improve the state-of-the-art sample complexity when paired with a Monte-Carlo estimator later.
The main technical difficulty in deriving~\eqref{eq:last_iter_opt_gap} is that the average deviation error between two consecutive policies can not be made small by Bernstein-like inequalities. 
Rather, we use Lemma~\ref{lem:performance_diff_deter} with the analysis in~\cite{jain2021making} to control the deviation error between groups of iterates.
We defer the full details of the proof to~\ref{sec:last_iterate}.

In the next section, we show how to satisfy conditions~\eqref{eq:second_moment} and~\eqref{eq:weight_diff_bias}.

\section{Auto-exploration for tabular RL} \label{sec:implicit}
This section develops a parameter-free implementation of SPMD in the tabular setting. We start by reviewing a simple Monte Carlo estimator.

\subsection{Review of a simple Monte Carlo estimator} \label{sec:tomc_review}
In this section and in policy evaluation for function approximation (Section~\ref{sec:spmd_linear}), we condense the regularized cost $c(s,a) + h^{\pi(\cdot \vert s)}(s)$ into $c(s,a)$ for simplicity, similar to~\cite{li2024stochastic}. 
This can be done since $\pi$ is a fixed policy during policy evaluation. 
We denote the state-action space, $\cZ := \cS \times \cA$.

First, we define the truncated on-policy Monte Carlo (TOMC) estimator~\cite{li2025policy}.
At each iteration of SPMD, we collect samples $z_0,z_1,\ldots$, where $z_t := (s_t,a_t)$ (we can always re-index the time index after each SPMD iteration).
For an exploration time $m \geq 0$, define the set $\cT_{m}(z) := \{t \in [0,m) : z_t=z\}$. When $\cT_m(z) \ne \varnothing$, let
\begin{talign} \label{eq:hitting_time}
    \taun^\pi(z) := \min \cT_{m}(z).
\end{talign}
Otherwise, when $\cT_m(z) = \varnothing$, let $\taun^\pi(z) = m$.
The random set $\cT_m(z)$ contains all the times the state-action $z$ is encountered between time $0$ and time $t-1$. 
Meanwhile, $\taun^\pi(z)$ is the first hitting time of state $z$, and states not visited within the exploration time are assigned values $m$. 
We also define the space of ``non-rare'' state-action pairs w.r.t.~the probability provided by~$\pi$,
\begin{talign} \label{eq:tilde_Pi_t_defn}
    \tPi^{\pi}(\unpi) := \{
        (s,a) : \pi(a \vert s) \geq \unpi
    \},
\end{talign}
for some to-be-determined constant $\unpi > 0$.
Later, we will show that $\tPi^{\pi}(\cdot)$ contains optimal actions with high probability for a properly chosen $\unpi$.
With these quantities, we define the TOMC estimator for the action-value function (the sum over an empty set of indices is 0) for a $z \in Z^\pi(\unpi)$ as,
\begin{talign} \label{eq:montecarlo_Q}
    \tQ_{m,\unpi}(z) := 
    \sum_{t=\taun^\pi(z)}^{m-1} \gamma^{t-\taun^\pi(z)} \cdot c(z_t).
\end{talign}
That is, $\tQ_{m,\unpi}(z)$ is a simple Monte-Carlo estimate.
Otherwise, when $z \not \in \tPi^{\pi}(\unpi)$, the estimator is set to a large value $\tQ_{m,\unpi}(z) = (1-\gamma)^{-1}$ to de-incentivize future policies from selecting $z$.
We next look to derive convenient accuracy estimates for $\tQ_{m,\unpi}$.

\subsection{Parameter-free Monte Carlo estimator} \label{sec:param_free_tomc}
We will now show $\tau^\pi_m(z)$ from~\eqref{eq:hitting_time} provides a convenient bound on the bias of $\tQ_{m,\unpi}(z)$.
Recall $P^{(t)}_\pi(\cdot,\cdot)$ defined below~\eqref{eq:visitation_measure}. 
We denote the one-step transition, $P_{\pi} = P_{\pi}^{(1)}$, for a policy $\pi$.
\begin{lemma} \label{lem:q_bias}
    For any $z \in \tPi^\pi(\unpi)$, the bias is bounded by
    \begin{talign*}
        \big \vert \mathbb{E}[ \tQ_{m,\unpi}(z) - Q^{\pi}(z) \vert \tau_{m}^\pi(z)] \big \vert
        &\leq
        (1-\gamma)^{-1} \gamma^{m - \tau^\pi_m(z)}.
    \end{talign*}
    Moreover, suppose the Markov chain defined by $P_\pi$ mixes towards a unique stationary state distribution $\nu \in \Delta_{\vert \cS \vert}$, where
    \begin{talign} \label{eq:l1_mixing}
        \|P_\pi^{(\tau)}( \cdot \vert s_0) - \nu \|_{1}
        \leq
        C \cdot \rho^{\tau+1}, \ \forall s_0 \in \cS,~\forall \tau > 0,
    \end{talign}
    for some parameters $C > 0$ and $\rho \in [0,1]$. 
    Then for any fixed $z \in Z$, we have
    \begin{talign} \label{eq:hitting_time_bound}
        \Pr\{ \gamma^{m - \tau^\pi_m(z)} > 3(1-\beta_\delta^\pi(z))^{m}\}
        &\leq
        \delta,
    \end{talign}
    where $\beta_\delta^\pi(z) 
    := 
    \min\{ 1-\sqrt{\gamma}, 
    \frac{\pi(a \vert s) \nu(s) }{6\tmix^\pi(z) \log(\frac{4\tmix^\pi(z)}{\pi(a \vert s) \nu(s) \cdot \delta})}\}$ and
    $\tmix^{\pi}(z) := \lceil \log_{\rho}(\frac{\pi(a \vert s) \nu(s)}{2C}) \rceil$. 
\end{lemma}
\begin{proof}
    For notational convenience, let $\lambda^{\pi}(s,a) := \pi(a \vert s) \nu(s)$ and $b(z) := \frac{\lambda^\pi(z)}{2\tmix^\pi(z) \cdot \log(\frac{4\tmix^\pi(z)}{\lambda^\pi(z) \cdot \delta})}$, so $\beta_\delta^\pi(z) = \min\{ 1- \sqrt \gamma, \frac{b(z)}{3}\}$. We omit dependence on $\delta$ and $\pi$ since it is clear from context. We similarly omit $\pi$ and $m$ from $\tau^\pi_m$.

    The first inequality is a direct result of the fact ${\tQ}_m$ is an $(m-\tau(z))$-truncated unbiased estimator of $Q^\pi$, with a truncation error of $(1-\gamma)^{-1}\gamma^{m-\tau(z)}$.
    For the second bound, it was shown in~\cite[Lemma 3.1]{li2025policy} that $\Pr\{\tau(z)=i \vert Z_0\} \leq (1-\frac{\lambda^\pi(z)}{2})^{\lceil i/\tmix^\pi(z) \rceil -1}$. 
    Therefore,
    \begin{talign}
        &\Pr\{\tau(z) \geq b(z)^{-1} \vert Z_0\} \label{eq:hitting_time_concentration}
        \\ &\leq
        \sum_{i=\lceil b(z)^{-1} \rceil }^\infty (1-\frac{\lambda^\pi(z)}{2})^{\lceil i/\tmix^\pi(z) \rceil -1} 
        \nonumber \\
        &\leq
        2\sum_{i=\lceil b(z)^{-1} \rceil}^\infty (1-\frac{\lambda^\pi(z)}{2})^{i/\tmix^\pi(z)} \nonumber \\
        &\leq
        2\sum_{i=\lceil b(z)^{-1} \rceil}^\infty (1-\frac{\lambda^\pi(z)}{2\tmix^\pi(z)})^{i} 
        \nonumber \\
        &=
        \frac{4\tmix^\pi(z)}{\lambda^\pi(z)}(1-\frac{\lambda^\pi(z)}{2\tmix^\pi(z)})^{\lceil b(z)^{-1} \rceil} 
        \leq
        \delta, \nonumber
    \end{talign}
    where the penultimate line uses Bernoulli's inequality and the last bound is by choice in $b(z)$.

    For the remainder of the proof, let us condition on the complement event above, $\tau(z) < b(z)^{-1}$, which holds with probability $1-\delta$.
    We consider two phases of policy evaluation, one where the exploration time $m$ is small and the other where it is sufficiently large.

    \noindent \textbf{Case 1}: $m \leq 2b(z)^{-1}$: Then we have the crude bound,
        $\gamma^{m-\tau(z)}
        \leq
        1
        =
        3(1-\frac{2}{3})
        \leq
        3(1-\frac{b(z)}{3})^{2b(z)^{-1}}
        \leq
        3(1-\frac{b(z)}{3})^m$,
    where the second inequality is by Bernoulli's inequality.

    \noindent \textbf{Case 2}: $m \geq 2b(z)^{-1}$: Recalling we condition on $\tau(z) < b(z)^{-1}$, then
        $m-\tau(z)
        >
        m-b(z)^{-1}
        =
        m/2 + (m/2-b(z)^{-1})
        \geq
        m/2$.
    Hence, $\gamma^{m - \tau(z)} \leq \gamma^{m/2}$. 
    Combining both cases together, \begin{talign*}
        \gamma^{m - \tau(z)} 
        &\leq 3\max\{\gamma^{m/2}, (1-\frac{b(z)}{3})^m\} 
        \\
        &= 3(1 - \min\{1-\sqrt{\gamma}, \frac{b(z)}{3}\})^m, 
    \end{talign*}
    and the proof is complete after observing the ``min'' term is $\beta^\pi_\delta(z)$.
\end{proof}

Although the bound $3(1-\beta^\pi_\delta(z))^m$ in~\eqref{eq:hitting_time_bound} decreases linearly with $m$, it only becomes non-trivially smaller than $1$ once the trajectory length $m$ is greater than $3/b(z)$. 
This is because for a small $m$, some state-action pair $z \in \cZ$ is unlikely to be visited, yielding an inaccurate $\tQ_{m,\unpi}$.
While these properties are also shared by~\cite[Lemma 3.1]{li2025policy}, their bound on the bias depends on unknown algorithm-dependent parameters, while ours is an easy-to-compute bound.

To ensure the bias is no larger than $\varsigma \in (0, (1-\gamma)^{-1}]$ for an arbitrary state-action $z \in Z^\pi(\unpi)$, Lemma~\ref{lem:q_bias} suggests the following \textit{dynamic exploration time},
\begin{talign} \label{eq:dyn_mixing_time}
    \tmpi(\unpi, \varsigma) := 
    \max_{z \in Z^\pi(\unpi)} \tau_\infty^\pi(z) + \lceil \log_\gamma((1-\gamma)\varsigma) \rceil,
\end{talign}
where $\varsigma,\unpi > 0$ are some to-be-chosen constants and $\tPi^{\pi}(\cdot)$ is from~\eqref{eq:tilde_Pi_t_defn}.
Here, the name ``dynamic'' indicates this variable depends on the hitting time $\tau^\pi_\infty(z)$, which is a random variable depending on real-time observations.
Therefore, $\tmpi(\unpi, \varsigma)$ is also random.
Although Lemma~\ref{lem:q_bias} derives a high-probability bound on this random variable, it depends on the mixing time $\tmix^\pi(z)$, which is algorithm-dependent and can be arbitrarily large.
The next section derives novel algorithm-independent bounds.

\subsection{Implicit action exploration implies state exploration}  \label{sec:implicit_action_implies_exploration}
We introduce the only assumption made in this paper about the underlying MDP, which pertains to the optimal policy $\pi^*$.
Note that a deterministic optimal policy exists for MDPs without regularization~\cite{puterman2014markov}.
\begin{assumption} \label{asmp:optimal_mixing}
    There exists a deterministic optimal policy $\pi^*$ s.t.~the Markov chain defined by $P_{\pi^*}$ is irreducible and has a unique stationary state distribution of $\nu^*$ where $\nu^*(s) \geq \unu^* > 0$.
    Moreover, there are constants $C^* \geq 1$ and $\rho^* \in (0,1)$ satisfying~\eqref{eq:l1_mixing} w.r.t.~policy $\pi^*$.
\end{assumption}
This assumption only posits the existence of a deterministic, optimal policy $\pi^*$ that satisfies~\eqref{eq:l1_mixing}, while prior works may assume every generated policy $\pi_t$ or any policy in general satisfies~\eqref{eq:l1_mixing}~\cite{lan2023policy,li2025policy}. Thus, this assumption is algorithm-independent.
We will later weaken it to any optimal (including randomized) policy when incorporating function approximation.

Using Assumption~\ref{asmp:optimal_mixing}, we will show action exploration (i.e.,~$\unpi > 0$) implies state exploration by satisfying~\eqref{eq:l1_mixing}.
\begin{proposition} \label{prop:pi_t_mixing}
  Suppose Assumption~\ref{asmp:optimal_mixing} holds, and consider a policy $\pi$ satisfying 
  \begin{talign*}
    \unpi := \min_{s,a : \pi^*(a \vert s) =1} \{\pi(a \vert s) \} > 0.
  \end{talign*}
  Then the Markov chain defined by $P_\pi$ has a unique stationary state distribution $\nu$ and mixes in the sense of~\eqref{eq:l1_mixing} with parameters $C \in (4,8)$ and $\rho \in (0,1-\frac{\unpi^{\ob}(\unu^*)^2}{2\ob} )$, where
  \begin{talign*}
      \ob := \lceil \frac{\log(C^*/\unu^*)}{\log(1/\rho^*)} \rceil < +\infty,
      \quad
      \nu(s) 
      \geq 
      \frac{\unpi^{\ob} \unu^*}{2} > 0, \ \forall s \in \cS.
  \end{talign*}
\end{proposition}
\begin{proof}
    Let $a^*(s) \in \cA$ satisfy $\pi^*(a^*(s) \vert s) = 1, \ \forall s \in \cS$.
  Since $\frac{\pi(a^*(s) \vert s)}{\pi^*(a^*(s) \vert s)} \geq \unpi$, then for any $t \geq 0$, 
  \begin{talign}
    &P^{(t)}_\pi(q,s) \nonumber
    \\
    &= 
    \sum_{s' \in \cS, a' \in \cA} \cP(s \vert s', a') \pi(a' \vert s') P_\pi^{(t-1)}(q,s') \nonumber
    \\
    &\geq
    \unpi \sum_{s' \in \cS, a' \in \cA} \cP(s \vert s', a') \pi^*(a' \vert s') P^{(t-1)}_\pi(q,s') \nonumber
    \\
    &\geq
    (\unpi)^{t} \sum_{s' \in \cS, a' \in \cA} \cP(s \vert s', a') \pi^*(a' \vert s') P^{(t-1)}_{\pi^*}(q,s')  \nonumber
    \\
    &=
    (\unpi)^{t} P^{(t)}_{\pi^*}(q,s). \label{eq:ratio_argument}
  \end{talign}
  Since $\unpi > 0$, then one can combine~\eqref{eq:ratio_argument} with the same argument that showed the Markov chain associated with $P_{\pi^*}$ is irreducible (see~\cite[Theorem 4.9]{levin2017markov}) to also establish the irreducibility of ${P}_{\pi}$.
  Therefore, $P_\pi$ has a stationary distribution, denoted by $\nu$~\cite[Corollary 1.17]{levin2017markov}.
  Denote $\nu^* \in \mathbb{R}^{\vert \cS \vert}$ as the stationary distributions of $\pi^*$.
  We have
  \begin{talign}
    &(\underline{\nu}^*)^{-1} \cdot \nu^* 
    \geq
    \mathbf{1}_{\vert \cS \vert}
    \geq
    \nu^* P_\pi 
    =
    (\nu^* - \underline{\nu}^* \cdot \nu) P_\pi + \underline{\nu}^* \cdot \nu 
    \nonumber
    \\ &\geq
    (\nu^* - \underline{\nu}^* \cdot \mathbf{1}) P_\pi + \underline{\nu}^* \cdot \nu 
    \geq
    \underline{\nu}^* \cdot \nu, \label{eq:nu_star_nu_switch}
  \end{talign}
  where the second inequality holds by the Cauchy-Schwarz inequality, and the equality is due to the stationarity of $\nu$, or~$\nu P_\pi = \nu$. 
  Next, applying Assumption~\ref{asmp:optimal_mixing} tells us that for any $x,y \in \cS$ and $t \geq \ob$,
  \begin{talign}
    \vert P_{\pi^*}^{(t)}(x,y) - \nu^*(y) \vert
    &\leq
    \frac{1}{2}\|P_{\pi^*}^{(t)}(x, \cdot) - \nu^*\|_1
    \nonumber
    \\ &\leq
    \frac{C^* \cdot (\rho^*)^{t+1}}{2}
    \leq
    \frac{\underline{\nu}^*}{2}
    \leq
    \frac{\nu^*(y)}{2}, \label{eq:pi_star_mixing}
  \end{talign}
  where the first inequality is by the relation between TV-distance and $\|\cdot\|_1$, while the third is by the choice of $\ob$.
  The above implies $P_{\pi^*}^{(t)}(x,y)  > \underline{\nu}^*(y)/2$.
  Therefore, we have for any $t \geq \ob$
  \begin{talign} \label{eq:P_pi_t_relation_to_nu}
    P_\pi^{(t)}(x,y) 
    &\geq
    (\unpi)^{t} \cdot P_{\pi^*}^{(t)}(x,y)
    \\ &\geq
    (\unpi)^{t} \cdot \frac{\nu^*(y)}{2}
    \geq
    \frac{\unpi^{t} \cdot (\underline{\nu}^*)^2}{2} \cdot \nu(y), \nonumber
  \end{talign}
  where the first inequality can be shown similarly to~\eqref{eq:ratio_argument}, and the last is by~\eqref{eq:nu_star_nu_switch}.
  Let us denote $\delta' := \frac{\unpi^{\ob} \cdot (\underline{\nu}^*)^2}{2} \in (0,1/2)$ and $\theta' := 1-\delta'$.
  We also define a mixing rate $\rho$ as
  \begin{talign*}
      \rho 
      := 
      (\theta')^{1/\ob} 
      \leq 
      1-\frac{(\unu^*)^2\unpi^{\ob}}{2\ob} \in (\frac{1}{2},1),
  \end{talign*}
  where the inequality is by Bernoulli's inequality.
  Then by combining~\eqref{eq:P_pi_t_relation_to_nu} and the bound on $\rho$ with the proof from~\cite[Theorem 4.9]{levin2017markov}, we guarantee
    $\|P_\pi^{(t)}(x,\cdot) - \nu^{\pi}\|_{\mathrm{tv}} \leq C' \cdot (\rho')^\tau$,
  where $C := 1/\theta' \in (1,2)$.
  This bound on the TV-distance then implies~\eqref{eq:l1_mixing} after observing $2C' \cdot \rho^\tau \leq C \cdot \rho^{\tau+1}$, where $C := 4C'$.
  Next, to lower bound $\nu(s)$, we have for any $s \in \cS$,
  \begin{talign} \label{eq:nu_s_lb}
      \nu(s)
      &\stackrel{(1)}{=}
      \sum_{s' \in \cS} \nu(s') P^{(\ob)}_\pi(s,s')
      \\
      &\stackrel{(2)}{\geq}
      {\unpi}^{\ob} \sum_{s' \in \cS} \nu(s') P_{\pi^*}^{(\ob)}(s',s)
      \nonumber \\
      &\geq
      {\unpi}^{\ob} \sum_{s' \in \cS} [\nu^*(s) - \| P_{\pi^*}^{(\ob)}(s',\cdot) - \nu^*\|_{\mathrm{tv}}]\nu(s') 
      \nonumber \\
      &\stackrel{(3)}{\geq}
      {\unpi}^{\ob} \cdot \frac{\nu^*(s)}{2} \sum_{s' \in \cS} \nu(s')
      \geq
      {\unpi}^{\ob} \cdot \frac{\unu^*}{2}
      \nonumber
  \end{talign}
  where (1) is by stationarity of $\nu$, (2) is by~\eqref{eq:ratio_argument}, and (3) is by~\eqref{eq:pi_star_mixing}.
\end{proof}

By integrating this state exploration result with implicit action exploration~\cite{li2025policy}, we can derive a new algorithm-independent sample complexity next.

\subsection{Policy optimization with implicit state-action exploration} \label{sec:yanli_work}

We will now show our main result on the implicit exploration of SPMD (update~\eqref{eq:spmd_update}).
First, define the Bregman divergence w.r.t.~negative Tsallis entropy with entropic index $p \in (0,1)$~\cite{li2025policy},  
    $\omega(u): = -\frac{1}{(1-p)p} \sum_{a \in \cA} [u(a)]^p$. 
We have the following auxiliary result.
\begin{lemma} \label{lem:tsallis_strong}
    The modulus of strong convexity w.r.t.~$\|\cdot\|_1$ for $\omega(u)$ above is $\mu_\omega = 1$. Equivalently, $D^{\pi'}_{\pi}(s) \geq \frac{1}{2}\|\pi'(\cdot \vert s) - \pi(\cdot \vert s)\|_1^2$.
\end{lemma}
The constant $\mu_\omega$ is derived through an optimization problem with a closed-form optimal solution, found by the KKT conditions.
The full details are in~\ref{sec:proofs_for_yanli_work}.

The resulting Bregman divergence at state $s$ can be simplified into
\begin{talign} \label{eq:bregman_tsallis}
    &D^{\pi'}_{\pi}(s) 
    =
    \frac{1}{(1-p)p} \sum_{a \in \cA} \big(-[\pi'(a \vert s)]^p 
    \\& + (1-p)[\pi(a \vert s)]^p + p \pi'(a \vert s)[\pi(a \vert s)]^{p-1} \big).  \nonumber
\end{talign}
When $\pi_0$ is the uniform policy ($\pi_0(a \vert s) = 1/\vert \cA \vert$) and $\pi^*$ is a deterministic policy, then $D^{\pi^*}_{\pi_0}(s) = D^{\pi^*}_{\pi_0}(s) = \frac{-1 + \vert \cA \vert^{1-p}}{(1-p)p}$ for every $s \in \cS$.

We are now ready to establish the main convergence result.
Let us denote $\unu_t := \min_{s \in \cS} \nu_t(s)$ and $\rho_t$ as the mixing parameters (in the sense of~\eqref{eq:l1_mixing}) for an intermediate SPMD policy $\pi_t$.
\begin{theorem} \label{thm:implicit_convergence}
    Consider any accuracy threshold $\epsilon \in (0,\frac{1}{1-\gamma})$ and reliability level $\delta \in (0,1)$. 
    Suppose we run SPMD with stepsize $\eta_t = \alpha/\sqrt{k}$, where 
    \begin{talign} \label{eq:k_eps_defn}
        \alpha = \frac{\sqrt{\bar{D}_0}}{\sqrt{\hat{Q}^2 + M_h^2}}, 
        \quad 
        k =
        \big \lceil \frac{200\bar{D}_0 (\hat{Q}^2 + M_h^2)}{(1-\gamma)^2 \epsilon^2} \big \rceil,
    \end{talign}
    and $\bar{D}_0$ satisfies $\max_{s \in \cS} D^{\pi^*}_{\pi_0}(s) \leq \bar{D}_0$.
    In particular, suppose we choose $\pi_0$ as the uniform policy, a Bregman divergence w.r.t.~a negative Tsallis entropy with entropic index $p$, where
    \begin{talign*}
        p = \frac{1}{2+\log(1/[1-\gamma])},
        \quad
        \bar{D}_0 = \frac{-1+\vert \cA \vert^{1-p}}{(1-p)p},
        \quad 
        \hat{Q} = \frac{1}{1-\gamma},
    \end{talign*}
    and the TOMC estimator~\eqref{eq:montecarlo_Q} with dynamic exploration time $\tilde{m}^\pi(\varsigma, \unpi)$ from~\eqref{eq:dyn_mixing_time}, where
    \begin{talign*} 
        \varsigma &= \min\big\{\frac{(1-\gamma)\epsilon}{36}, \frac{2\hat{Q}\sqrt{2\log(2 \vert \cS \vert k/\delta)}}{\sqrt{k}} \big\}
        \\
        \unpi &= 
        \frac{\vert \cA \vert^{-1}(1-\gamma)p^2}{100\log(2\vert \cS \vert k/{\delta})}.
    \end{talign*}
    If Assumption~\ref{asmp:optimal_mixing} takes place, then with probability~$1-\delta$, both of the following occur.
    First, we have for all states $s \in \cS$,
    \begin{talign*}
        V^{\pi_{k}}(s) - V^{\pi^*}(s) \leq \epsilon \cdot \log(4k) \log(\frac{2 \vert \cS \vert\log(4k)}{\delta}).
    \end{talign*}
    Second, the total number of samples is bounded by 
    \begin{talign} \label{eq:paramfree_sampling_complexity}
        &\overbrace{\tilde{O}\big\{ \tfrac{\vert \cA \vert^{2-p} (\hat{Q}^2 + M_h^2) [\log(1/\delta)]^2}{(1-\gamma)^3 (1-\orho) \unu\epsilon^2 } \big\}}^{\text{alg-dependent bound}}
        \\ &\leq
        \underbrace{\tilde{O}\big\{ \tfrac{\vert \cA \vert^{2-p} (\hat{Q}^2 + M_h^2)[\log(1/\delta)]^{2}}{(1-\gamma)^{3}\epsilon^2 } \cdot \Dexpl(\delta) \big\}}_{\text{alg-independent bound}}, \nonumber
    \end{talign}
      where $\unu := \min_{0 \leq t \leq k-1} \unu_t$ and $\orho = \max_{0 \leq t \leq k-1} \rho_t$, $\ob = \lceil \frac{\log(8/\nu^*)}{\log(1/\rho^*)} \rceil$ (Proposition~\ref{prop:pi_t_mixing}), 
    \begin{talign} \label{eq:Dexpl_defn}
        \Dexpl(\delta) := \big(\frac{\vert \cA \vert[\log(1/\delta)]}{1-\gamma}\big)^{2\ob} \cdot \frac{1}{(1-\rho^*)(\unu^*)^3},
    \end{talign}
    and $\tilde{O}(\cdot)$ hides algorithm-independent and polylogarithmic terms.
\end{theorem}
\begin{proof}
    We wish to apply Proposition~\ref{prop:stronger_gap_converge_with_relaxed}.
    To do so, we must verify its two conditions, one on the magnitude~\eqref{eq:second_moment} and the other the bias~\eqref{eq:weight_diff_bias}.
    \eqref{eq:second_moment} is satisfied with our choice $\hat{Q} = (1-\gamma)^{-1}$ by construction of the Monte-Carlo estimator~\eqref{eq:montecarlo_Q}.
    It remains to bound the bias.
    We will show for iterations $t=-1,0,1,\ldots,k-1$ (where $t=-1$ vacuously holds for ``iteration $t=-1$''),
    \begin{enumerate}
        \item \textbf{Item 1}: Inequality~\eqref{eq:weight_diff_bias} w.r.t.~$\pi_t$ and an upper bound of $2\varsigma$;
        \item \textbf{Item 2}: If $\pi_{t+1}(a \vert s) < \unpi$, then $\pi^*(a \vert s) = 0$.
    \end{enumerate}
    Items 1 and 2 can be shown similarly to Lemma 5.5 and Proposition 5.3 from~\citep{li2025policy}, respectively.
    We show the main steps here for completeness, which is done through mathematical induction.

    As briefly mentioned earlier, the ``base case'' $t=-1$ holds.
    Consider any $t \geq 0$, and suppose by induction Items 1 and 2 hold for every iteration up to $t-1$.
    Define $\delta_t(s,a) :=  \tQt(s,a) - \Qt(s,a)$. 
    At iteration $t$ and any $s \in \cS$, we have
    \begin{talign*}
        &\mathbb{E} \langle \delta_t(s,\cdot), \pi_t(\cdot \vert s) - \pi^*(\cdot \vert s) \rangle
        \\ &\stackrel{(1)}{=}
        \mathbb{E}_{a : \pi_t(a \vert s) \geq \unpi} \delta_t(s,\cdot) \cdot [\pi_t(a \vert s) - \pi^*(\cdot \vert s)]
        \\ &+
        \mathbb{E}_{a : \pi_t(a \vert s) < \unpi} \delta_t(s,\cdot) \cdot \pi_t(a \vert s) 
        \\
        &\stackrel{(2)}{\geq}
        -2\max_{a : \pi_t(a \vert s) \geq \unpi} \vert \mathbb{E}_{\xi_{t-1}} \delta_t(s,a) \vert 
        \stackrel{(3)}{\geq}
        2\varsigma.
    \end{talign*}
    Here, (1) uses the inductive hypothesis for Item 2; the first summand before (2) follows by the tower rule and the Cauchy-Schwarz inequality, while the second summand is by $\delta_t(s,a) = (1-\gamma)^{-1} - \Qt(s,a) \geq 0$, where we recall the definition of~$\tQt$ from~\eqref{eq:montecarlo_Q} for $z \not \in \tPi^{\pi_t}(\unpi)$; and (3) applies Lemma~\ref{lem:q_bias} and choice of the dynamic exploration time $\tm^{\pi_t}(\unpi,\varsigma)$ from~\eqref{eq:dyn_mixing_time}.
    Thus, we have shown Item 1.

    We now show Item 2 at iteration $t$. 
    Applying the second result from Proposition~\ref{prop:stronger_gap_converge_with_relaxed} for $\pi_t$ and recalling the choice of $\alpha$, $k$, and $\varsigma$, we have with probability $1-\delta/(2k)$,
    \begin{talign}
        (1-\gamma)D^{\pi^*}_{\pi_t}(s)
        &\leq
        \bar{D}_0 + \alpha^2 (\hat{Q}^2 + M_h^2)  \nonumber
        \\& 
        + 4\alpha \hat{Q}\sqrt{2\log(2\vert \cS \vert k/\delta)} + 2\alpha \varsigma \sqrt{k} 
        \nonumber
        \\
        &\leq
        2\bar{D}_0 + 8\sqrt{2\bar{D}_0 \log(2 \vert \cS \vert k/\delta)}
        \nonumber
        \\
        &\leq
        10(\bar{D}_0+1)\sqrt{2\log(2 \vert \cS \vert k/\delta)},
        \label{eq:dist_t_bound}
    \end{talign}
    where the second inequality used our choice of $\alpha$ and $\varsigma$.
    Since $\pi^*$ is assumed to be a deterministic optimal policy, let $a^*(s) \in \cA$ be the optimal action at state $s$, where $\pi^*(a^*(s) \vert s) = 1$.
    Combining~\eqref{eq:bregman_tsallis} with~\eqref{eq:dist_t_bound} and recalling $\bar{D}_0$, we deduce
    \begin{talign*}
        &\frac{(-1 + p \cdot [\pi_t(a^*(s) \vert s)]^{p-1})}{(1-p)p} 
        \\ &\leq 
        10\big(\frac{-1 + \vert \cA \vert^{1-p}}{(1-p)p} + 1 \big) \frac{\sqrt{2 \log(2 \vert \cS \vert k/\delta)}}{1-\gamma}
        \\ &\leq
        \big(\frac{-1 + 10\vert \cA \vert^{1-p}}{(1-p)p} \big) \frac{\sqrt{2 \log(2 \vert \cS \vert k/\delta)}}{1-\gamma},
    \end{talign*}
    where the last inequality is because $(1-p)p \leq 1/4$.
    The above then implies
    \begin{talign*}
        &[\pi_t(a^*(s) \vert s)]^{p-1}
        \\ &\leq
        \frac{10 \vert \cA \vert^{1-p}\sqrt{2 \log(2 \vert \cS \vert k/\delta)}}{(1-\gamma)p}
        \\ &\leq
        \big( \frac{10^2 \vert \cA \vert \cdot [2\log(2 \vert \cS \vert k/\delta)]}{(1-\gamma)p^2}\big)^{1-p}
        =
        \big(\frac{1}{\unpi}\big)^{1-p},
    \end{talign*}
    where the second inequality is by $p \in (0,1/2)$ and $(1-\gamma)p \geq [(1-\gamma)p^2]^{1-p}$, because
    \begin{talign*}
        \log\big(\frac{(1-\gamma)p}{[(1-\gamma)p^2]^{1-p}}\big)
        &=
        p \log(1-\gamma) - (1-2p)\log(p)
        \\ &=
        p \log(1-\gamma)[1 + \log(p)]
        \geq 0.
    \end{talign*}
    The above bound helps complete the proof by induction for Item 2.

    With the proof by induction completed, we have satisfied the hypothesis of Proposition~\ref{prop:stronger_gap_converge_with_relaxed}.
    By using $\bar{D}_0 \geq 1$ and noting our choices in $\alpha$, $k$, and $\varsigma$, then after applying Proposition~\ref{prop:stronger_gap_converge_with_relaxed}, we can show optimality gap of $V^{\pi_k}(s) - V^{\pi^*}(s) \leq \epsilon \log(4k)\log(\frac{2\vert \cS \vert \log(4k)}{\delta})$ with probability $1-\delta/2$.

    Finally, we establish the sample complexity.
    We start with the algorithmic-dependent bound on $\tmpi_t(\unpi,\varsigma)$.
    Define $\delta' := \delta/(2\vert \cZ \vert k)$ and recall $\lambda^{\pi_t}(z) = \nu_t(s) \cdot \pi_t(a \vert s)$. 
    \sloppy Also denote $M_1 := \frac{
        \max_{z \in \tPi^{\pi_t}(\varsigma,\unpi)} 
        2 \log(\frac{4C}{\lambda^{\pi_t}(z)}) \log \big( \frac{4\log(4C/\lambda^{\pi_t}(z))}{(1-\rho_t)\lambda^{\pi_t}(z) \delta'}\big)
    }{\lambda^{\pi_t}(z) \cdot (1-\rho_t)}$
    and $M_2 :=  \frac{
        2\log(\frac{4C}{\unu_t \unpi}) \log \big( \frac{4\log(4C/[\unu_t \unpi])}{(1-\rho_t)\unu_t \unpi \delta'}\big)
    }{\unu_t \unpi \cdot (1-\rho_t)}$.
    We have with probability $1-\delta'$,
    \begin{talign}
        \tmpi_t(\unpi,\varsigma)
        &\leq
        \max_{z \in \tPi^{\pi_t}(\varsigma, \unpi)} \{ {\log \frac{36}{\varsigma \cdot(1-\gamma)}}/{\beta^{\pi_t}_{\delta'}(z)}\}  \nonumber
        \\
        &\leq 
        \log (\frac{36}{\varsigma \cdot(1-\gamma)})
        \max\big\{ \frac{1}{1-\sqrt{\gamma}},
        M_1
        \big\} \nonumber
        \\
        &\leq
        \log (\frac{36}{\varsigma \cdot(1-\gamma)})
        \max\big\{ \frac{2}{1-\gamma},
        M_2
        \big\} 
        \nonumber
        \\ &=
        \tilde{O}\big\{ \frac{\vert \cA \vert (\log\frac{\vert \cZ \vert}{\epsilon \delta})^2}{(1-\gamma) \unu_t \cdot (1-\rho_t)}\big\}, \label{eq:mixing_time_concentration}
    \end{talign}
    where the first inequality follows from Lemma~\ref{lem:q_bias}, definition of $\tmpi_t(\unpi, \varsigma)$ in~\eqref{eq:dyn_mixing_time}, and $-\log(1-x) \geq x$ for $x > 0$; the second inequality is by definition of $\beta^{\pi_t}_{\delta'}$ in Lemma~\ref{lem:q_bias}; and the third inequality is by definition of $\tPi^{\pi_t}(\varsigma,\unpi)$ in~\eqref{eq:tilde_Pi_t_defn}.
    The algorithm-dependent bounds follows by multiplying the per-iteration sample cost above with the iteration count $k = \tilde{O}(\frac{\vert \cA \vert^{1-p}(\hat{Q}^2 + M_h^2)}{(1-\gamma)^2\epsilon^2})$.

    We now establish the algorithm-independent bound.
    We observe that at any iteration $t$,
    \begin{talign} \label{eq:alg_dep_to_alg_indep}
        \unu_t \cdot (1-\rho_t)
        &\geq
        \frac{\unpi^{\ob}(\unu^*)^2}{2\ob} \cdot \frac{\unpi^{\ob}\unu^*}{2}
        \\ &\geq
        \frac{\unpi^{2\ob}(\unu^*)^3 \log(1/\rho^*)}{8\log(4/\nu^*)}
        \geq
        \frac{\unpi^{2\ob}(\unu^*)^3 (1-\rho^*)}{8\log(4/\nu^*)}, \nonumber
    \end{talign}
    where the first inequality is by Proposition~\ref{prop:pi_t_mixing} and the second is by definition of $\ob$ and $\rho^* \geq 1/2$ (Assumption~\ref{asmp:optimal_mixing}). Finally, the probability of failure is by union bound over all the possible failures.
%
\end{proof}

This appears to be the first algorithm-independent $\tilde{O}(\epsilon^{-2})$ sample complexity for finite state and action discounted RL problems in the online, on-policy model.
In addition, the algorithm-dependent sample complexity is improved by $\vert \cA \vert^{1/2}/(1-\gamma)$ compared to prior bounds for implicit exploration~\cite[Proposition 5.5]{li2025policy}. 
This improvement is due to a new modulus of strong convexity for the Bregman divergence (Lemma~\ref{lem:tsallis_strong}) and a selection of $p$ that adapts to the discount factor $\gamma$.
Furthermore, the algorithm is parameter-free since it avoids a priori knowledge of $\unu_t$, $\rho_t$, $\unu^*$, $\rho^*$.

One deficiency of the dynamic exploration time is that it requires visiting every state.
We next consider function approximation to exploit low-dimensional features, and we develop its extension for the auto-exploration framework.

\section{Auto-exploration for RL with function approximation}\label{sec:spmd_linear}
Following the existing policy evaluation literature~\cite{li2018hyperband,li2024stochastic,mou2023optimal,lan2023policy}, we will show how to estimate the action-value function $Q^\pi$ using linear function approximation. 

\subsection{A new temporal difference operator} \label{sec:F_new}
Let $\phi(z) \in \mathbb{R}^{d}$ be a basis/feature vector for each state-action pair $z \in \cZ$ with feature size $d \leq \vert \cZ \vert$. 
See~\cite{sutton1998reinforcement} for general examples and the discussion after Theorem~\ref{thm:pmd_ctd} for a specific one. 
Define the feature matrix, $\Phi := [\phi(z_1), \ldots, \phi(z_{\vert \cZ \vert})]^T \in \mathbb{R}^{\vert \cZ \vert \times d}$, which we assume is full column rank without loss of generality. 
Consider the linear function approximator with weights $\theta \in \mathbb{R}^d$,
\begin{talign*}
    Q_{\theta} := \Phi \theta.
\end{talign*}
For notational convenience, we may remove $\theta$ and pass the mathematical accent for $\theta$ onto $Q$. 

Our goal is to find weights $\thetastr$ such that $\Qstr = Q_{\thetastr}$ approximates $Q^\pi$ as well as possible. 
An optimality condition for such weights is satisfying the projected Bellman equation~\cite{sutton1998reinforcement,mou2023optimal,li2023accelerated}, 
\begin{talign} \label{eq:proj_bellman_eqn_og}
   \Qstr = \Phi[\Phi^T W \Phi]^{-1} \Phi^TW(\gamma P^\pi \Qstr + c).
\end{talign}
The transition kernel $P^\pi \in \mathbb{R}^{\vert \cZ \times \vert \cZ \vert}$ defined w.r.t.~policy $\pi$ is $P^\pi([s,a],[s',a']):= P(s' \vert s,a) \pi(a' \vert s')$.
Since $\Phi$ is full rank, the inverse exists if the diagonal weighting matrix $W = \mathrm{diag}(w)$ is positive for a distribution $w$ over the state-action space.
A standard choice of distribution is $w(z) =\nu^\pi(s) \cdot \pi(a \vert s)$, where $\nu^\pi$ is the stationary state distribution of $\pi$ and $z = (s,a)$. However, it may be difficult to sample from $\nu^\pi$, since the cost of sampling depends on the unknown mixing time~\cite{levin2017markov}. Instead, we select $w$ more flexibly with
\begin{talign} \label{eq:w_dist_defn} 
    w(z) &:= \kappa^{\piexplst}_{\sor}(s) \cdot \piexplact(a \vert s), 
\end{talign}
where the discounted state visitation distribution $\kappa^\pi_q(\cdot)$ is from~\eqref{eq:visitation_measure}, the \textit{origin state} $\sor \in \cS$ is an arbitrarily chosen state, and the \textit{exploration policy} $\piexplact$ is defined as
\begin{talign}
    \piexplact(a \vert s)
    &:= 
    (1-\epsact) \pi(a \vert s) + \frac{\epsact}{\vert \cA \vert}
    \label{eq:tpi_defn}
\end{talign}
for some user-defined exploration parameter $\epsact \in [0,1]$. 
We will later sample actions from $\tpi$ to induce both state and action exploration.

Because our selection of $W = \mathrm{diag}(w)$ differs from the standard stationary distribution~\cite{mou2023optimal,li2023accelerated}, the structural properties of~\eqref{eq:proj_bellman_eqn_og} differ as well. 
To reconcile this difference, we will establish the well-posedness of~\eqref{eq:proj_bellman_eqn_og}.
Recall $\|x\|_W = \sqrt{x^TWx}$ for a vector $x \in \mathbb{R}^{\vert \cS \vert \cdot \vert \cA \vert}$.
\begin{proposition}\label{prop:proj_bellman_solvable}
    If $\min_{s \in \cS} \kappa^{\piexplst}_{\sor}(s) > 0$ and $\epsact \in (0,\frac{1-\gamma}{2}]$, then~\eqref{eq:proj_bellman_eqn_og} has a unique solution $\Qstr$. Moreover, $\|\Qstr\|_W \leq 4(1-\gamma)^{-1}$. 
\end{proposition}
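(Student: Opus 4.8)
The plan is to recast~\eqref{eq:proj_bellman_eqn_og} as a fixed-point equation and invoke the Banach fixed-point theorem. Under the hypotheses we have $w(z) = \kappa^\pi_{\sor,f}(s)\,\piexplact(a\vert s) > 0$ for every $z=(s,a)$ (since $\piexplact(a\vert s)\ge \epsact/\vert\cA\vert$ and $\min_s\kappa^\pi_{\sor,f}(s)>0$), and $w$ is a probability distribution; hence $W=\mathrm{diag}(w)\succ 0$ and, with $\Phi$ full column rank, $\Phi^T W\Phi$ is invertible, so the $W$-orthogonal projection $\Pi_W := \Phi(\Phi^T W\Phi)^{-1}\Phi^T W$ onto $\mathrm{span}(\Phi)$ is well defined. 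Writing $T(Q):=\gamma P^\pi Q + c$, equation~\eqref{eq:proj_bellman_eqn_og} is exactly $\Qstr = \Pi_W T(\Qstr)$. Since $\Pi_W$ is non-expansive in $\|\cdot\|_W$ and $\mathrm{span}(\Phi)$ is finite-dimensional (hence complete), the entire argument reduces to showing $\gamma P^\pi$ is a contraction in $\|\cdot\|_W$.

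The crux, and the main obstacle, is that $W$ is built from the discounted visitation distribution $\mu:=\kappa^\pi_{\sor,f}$ and the \emph{exploration} policy $\piexplact$ rather than from the stationary state-action distribution of $\pi$, so $P^\pi$ need not be non-expansive in $\|\cdot\|_W$ and I must bound $\|P^\pi Q\|_W$ directly. Writing $(P^\pi Q)(s,a)=\sum_{s'}\cP(s'\vert s,a)V(s')$ with $V(s'):=\sum_{a'}\pi(a'\vert s')Q(s',a')$ and applying Jensen's inequality twice, once against $\cP(\cdot\vert s,a)$ and once against $\pi(\cdot\vert s')$, gives
\begin{talign*}
  \|P^\pi Q\|_W^2 \leq \sum_{s'} \tilde\mu(s') \sum_{a'} \pi(a'\vert s') Q(s',a')^2, \qquad \tilde\mu(s') := \sum_{s,a} w(s,a)\, \cP(s'\vert s,a),
\end{talign*}
where $\tilde\mu$ is the one-step push-forward of $w$ under the transition kernel.

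Next I would control $\tilde\mu$ pointwise by $\mu$. Decomposing $\piexplact=(1-\epsact)\pi + \epsact\,\mathcal U(\cA)$ yields $\tilde\mu(s') = (1-\epsact)\sum_s\mu(s)P^\pi_{\cS}(s,s') + \epsact\sum_s\mu(s)P^{\mathcal U}_{\cS}(s,s')$, where $P^\pi_{\cS},P^{\mathcal U}_{\cS}$ are the induced state-to-state kernels. Because $\mu=\kappa^\pi_{\sor,f}$ is, by linearity of~\eqref{eq:visitation_measure} in the initial distribution, the discounted visitation flow from $\rho_0:=(1-f)e_{\sor}+f\,\mathcal U(\cS)$, it obeys $\mu(s')=(1-\gamma)\rho_0(s')+\gamma\sum_s\mu(s)P^\pi_{\cS}(s,s')$, giving $\sum_s\mu(s)P^\pi_{\cS}(s,s')\le\gamma^{-1}\mu(s')$; the exploration term is at most $\epsact\le(\epsact/\ukappa)\,\mu(s')$ since each push-forward coordinate is $\le 1$ and $\mu(s')\ge\ukappa>0$. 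Combining these with $\pi(a'\vert s')\le\piexplact(a'\vert s')/(1-\epsact)$ produces
\begin{talign*}
  \|P^\pi Q\|_W^2 \leq \frac{\lambda}{1-\epsact}\|Q\|_W^2, \qquad \lambda := \frac{1-\epsact}{\gamma} + \frac{\epsact}{\ukappa},
\end{talign*}
so $\gamma P^\pi$ has contraction factor $\beta:=\gamma\sqrt{\lambda/(1-\epsact)}$, whence $\beta^2=\gamma+\gamma^2\epsact/[(1-\epsact)\ukappa]$. This is the step where the hypothesis $\epsact\le(1-\gamma)\ukappa/4$ is consumed: it forces $1-\epsact>3/4$ and hence $\beta^2\le(2\gamma+1)/3$, which in turn gives $\beta\le(3+\gamma)/4<1$ (the remaining inequality $(2\gamma+1)/3\le[(3+\gamma)/4]^2$ reduces to $(3\gamma-11)(\gamma-1)\ge0$, true on $[0,1)$).

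With the contraction established, Banach's theorem delivers the unique fixed point $\Qstr\in\mathrm{span}(\Phi)$, settling existence and uniqueness. For the norm bound I would take $\|\cdot\|_W$ of $\Qstr=\Pi_W T(\Qstr)$, use non-expansiveness of $\Pi_W$ together with $\gamma\|P^\pi\Qstr\|_W\le\beta\|\Qstr\|_W$ and $\|c\|_W\le 1$ (valid since $c\in[0,1]$ and $w$ is a distribution), to obtain $\|\Qstr\|_W\le\beta\|\Qstr\|_W+1$, i.e. $\|\Qstr\|_W\le(1-\beta)^{-1}$. Finally $\beta\le(3+\gamma)/4$ gives $1-\beta\ge(1-\gamma)/4$, so $\|\Qstr\|_W\le 4(1-\gamma)^{-1}$, as claimed.
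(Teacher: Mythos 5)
Your proof is correct and is essentially the paper's own argument: the mathematical core of both is the $\|\cdot\|_W$-estimate for $P^\pi$ obtained from Jensen's inequality, the discounted-flow (``time-shift'') inequality $\sum_{s} \kappa^\pi_{\sor,f}(s)\, P_\pi(s,s') \le \gamma^{-1}\kappa^\pi_{\sor,f}(s')$, and the cap $\epsact \le (1-\gamma)\min_s \kappa^\pi_{\sor,f}(s)/4$ to absorb the exploration perturbation, which is exactly the content of Lemma~\ref{lem:nearly_nonexpand}. The remaining differences are presentational only: the paper concludes via a spectral-radius bound $\sigma(\gamma \Pi P^\pi) \le \gamma^{1/4} < 1$ and a Neumann series, whereas you invoke the Banach fixed-point theorem with the explicit rational contraction factor $\beta \le (3+\gamma)/4$ and a triangle inequality, with both routes consuming the hypotheses at the same steps and yielding the same bound $\|\Qstr\|_W \le 4(1-\gamma)^{-1}$.
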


This key result, whose proof is moved to~\ref{sec:missing_F_new_ec}, says that solving~\eqref{eq:proj_bellman_eqn_og} is equivalent to policy evaluation for estimating $Q^\pi$.
There are also two main advantages of utilizing $\kappa^\tpi_{\sor}$ over the stationary state distribution $\nu^\pi$.
One, $\kappa^{\piexplst}_{\sor}$ is always well-defined for finite states while $\nu^\pi$ may not exist for periodic Markov chains~\cite{levin2017markov}.
Thus, using $\kappa^{\piexplst}_{\sor}$ covers a more general class of Markov chains. 
Two, one can conveniently draw nearly unbiased samples from $\kappa^{\piexplst}_{\sor}$ as explained in the next subsection, while sampling from $\nu^\pi$ requires the unknown mixing rate.

While our goal is to solve~\eqref{eq:proj_bellman_eqn_og}, it involves an undesirable matrix inverse. To bypass this, we left multiply~\eqref{eq:proj_bellman_eqn_og} by $\Phi^TW$ to produce the following sufficient condition for the best weights $\thetastr$:
\begin{talign} \label{eq:lin_op}
    &F(\thetastr) = \mathbf{0},
    \quad \text{where} 
    \\ & F(\theta) := \Phi^TW(\Phi \theta - c - \gamma P^\pi \Phi \theta). \nonumber
\end{talign}
Solving $F(\theta) = \mathbf{0}$ is equivalent to solving a linear system. In fact, it turns out that $F(\thetastr) = \mathbf{0}$ is both necessary and sufficient to solve~\eqref{eq:proj_bellman_eqn_og}, as shown below. 
For notational convenience, we define
\begin{talign}
    \LamMax &:= \max_{z \in \cZ} \|\phi(z)\|_2^2 
    \nonumber
    \\
    \LamMin &:= \lambda_{\min}(\Phi^TW\Phi).
    \label{eq:uW_defn}
\end{talign}
Since $W = \mathrm{diag}(w)$ with distribution $w$, then $\LamMin \leq \lambda_{\max}(\Phi^T W \Phi) \leq \LamMax$.
We will now show the strong monotonicity of $F(\cdot)$.
The next two proofs are in~\ref{sec:missing_F_new_ec}.
\begin{lemma} \label{lem:monotone_with_disc_visit}
    Under the same hypothesis as Proposition~\ref{prop:proj_bellman_solvable}, then $\LamMin > 0$ and 
    for all $\theta,\theta' \in \mathbb{R}^d$,
        $\langle F(\theta) - F(\theta'), \theta - \theta' \rangle
        \geq
        \frac{1-\gamma}{4}\|Q - Q'\|_W^2
        \geq
        \frac{(1-\gamma)\LamMin}{4}\|\theta - \theta'\|_2^2$. 
\end{lemma}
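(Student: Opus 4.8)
The plan is to reduce the claim to a single spectral inequality in the $W$-weighted inner product and then exploit the defining recursion of the discounted state-visitation distribution. Writing $u := Q - Q' = \Phi(\theta-\theta')$ and using $F(\theta)-F(\theta') = \Phi^TW(I-\gamma P^\pi)u$, the cost term $c$ cancels, and a short computation gives $\langle F(\theta)-F(\theta'),\theta-\theta'\rangle = \langle u,(I-\gamma P^\pi)u\rangle_W = \|u\|_W^2 - \gamma\langle u, P^\pi u\rangle_W$, where $\langle x,y\rangle_W := x^TWy$ (symmetric since $W=\mathrm{diag}(w)$). By Cauchy--Schwarz in this inner product, $\langle u, P^\pi u\rangle_W \leq \|u\|_W\|P^\pi u\|_W$, so it suffices to produce a constant $\lambda$ with $\|P^\pi u\|_W \leq \lambda\|u\|_W$ and $\gamma\lambda \leq (3+\gamma)/4$; this yields $\langle F(\theta)-F(\theta'),\theta-\theta'\rangle \geq (1-\gamma\lambda)\|u\|_W^2 \geq \tfrac{1-\gamma}{4}\|u\|_W^2$, the first asserted bound. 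The second bound is then immediate from $\|u\|_W^2 = (\theta-\theta')^T\Phi^TW\Phi(\theta-\theta') \geq \umu\|\theta-\theta'\|_2^2$.

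The heart of the argument is the operator-norm bound on $P^\pi$. Because $W$ is not the stationary state-action distribution, the usual nonexpansiveness $\|P^\pi u\|_W \leq \|u\|_W$ fails, and this is the main obstacle. I would control it by Jensen's inequality, $\|P^\pi u\|_W^2 \leq \sum_{z'}(w^TP^\pi)(z')\,u(z')^2$, reducing everything to bounding the pushforward measure $w^TP^\pi$ componentwise by a small multiple of $w$. Here I use that $\kappa^\pi_{\sor,f}$ is, by~\eqref{eq:tkappa_defn} and linearity of $q \mapsto \kappa^\pi_q$ in the initial distribution, itself the discounted state-visitation distribution from $\rho_0 := (1-f)e_{\sor} + f\,\mathcal{U}(\cS)$. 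Hence it obeys $\kappa^\pi_{\sor,f} = (1-\gamma)\rho_0 + \gamma\,\kappa^\pi_{\sor,f}P_\pi^{\cS}$, where $P_\pi^{\cS}$ is the induced state transition matrix, giving $\kappa^\pi_{\sor,f}P_\pi^{\cS} = \gamma^{-1}(\kappa^\pi_{\sor,f}-(1-\gamma)\rho_0) \leq \gamma^{-1}\kappa^\pi_{\sor,f}$ componentwise.

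With $w(s,a) = \kappa^\pi_{\sor,f}(s)\,\piexplact(a\vert s)$ and $\piexplact = (1-\epsact)\pi + \epsact/\vert\cA\vert$, a direct computation shows $(w^TP^\pi)(s',a') = \pi(a'\vert s')\,g(s')$ with on-policy action weight $\pi(\cdot\vert s')$ (the next-step distribution encoded in $P^\pi$) and state marginal $g(s') := \sum_{s,a}\kappa^\pi_{\sor,f}(s)\piexplact(a\vert s)\cP(s'\vert s,a)$. Splitting $\piexplact$ into its on-policy and uniform parts, the on-policy part contributes $(1-\epsact)(\kappa^\pi_{\sor,f}P_\pi^{\cS})(s') \leq \gamma^{-1}\kappa^\pi_{\sor,f}(s')$ by the recursion, while the uniform part is crudely bounded by $\epsact$ using $\sum_a \cP(s'\vert s,a)\leq\vert\cA\vert$. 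Invoking $\min_s\kappa^\pi_{\sor,f}(s)>0$ and $\epsact \leq \tfrac{1-\gamma}{4}\min_s\kappa^\pi_{\sor,f}(s)$ to write $\epsact \leq \tfrac{1-\gamma}{4}\kappa^\pi_{\sor,f}(s')$ gives $g(s')\leq(\gamma^{-1}+\tfrac{1-\gamma}{4})\kappa^\pi_{\sor,f}(s')$. Converting $\pi(a'\vert s')$ back via $\pi \leq (1-\epsact)^{-1}\piexplact$ then yields $(w^TP^\pi)(s',a') \leq (1-\epsact)^{-1}(\gamma^{-1}+\tfrac{1-\gamma}{4})\,w(s',a')$, i.e. $\lambda^2 = (1-\epsact)^{-1}(\gamma^{-1}+\tfrac{1-\gamma}{4})$.

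The final step is the numerology $\gamma\lambda \leq (3+\gamma)/4$. Using $\epsact \leq (1-\gamma)/4$ gives $(1-\epsact)^{-1}\leq 4/(3+\gamma)$, so $\gamma^2\lambda^2 = (1-\epsact)^{-1}\big(\gamma+\tfrac{\gamma^2(1-\gamma)}{4}\big) \leq \tfrac{4}{3+\gamma}\cdot\tfrac{3\gamma+1}{4} = \tfrac{3\gamma+1}{3+\gamma}$, and one checks $16(3\gamma+1)\leq(3+\gamma)^3$ on $[0,1)$ (with equality at $\gamma=1$), which is exactly $\gamma^2\lambda^2 \leq ((3+\gamma)/4)^2$. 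This closes the estimate and hence the proof. I expect the only delicate points to be the bookkeeping in bounding $g(s')$ and the closing polynomial inequality; the conceptual obstacle is purely the loss of $W$-nonexpansiveness, which the visitation recursion together with the calibrated exploration level $\epsact$ are designed to absorb.
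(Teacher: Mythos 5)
Your proposal is correct and takes essentially the same route as the paper: the paper reduces the claim (via a cited monotonicity argument) to its Lemma~\ref{lem:nearly_nonexpand}, $\|P^\pi u\|_W^2 \le \gamma^{-3/2}\|u\|_W^2$, whose proof is exactly your Jensen-plus-pushforward argument --- the discounted-visitation recursion appears there as the ``time-shift'' inequality, and your uniform-part absorption is the paper's bound $\piexplact(a\vert s) - \pi(a \vert s) \le \epsact/\vert\cA\vert$ combined with the calibration $\epsact \le \tfrac{(1-\gamma)\min_s \kappa^\pi_{\sor,f}(s)}{4}$. The only difference is constant bookkeeping: the paper tracks everything through powers of $\gamma^{1/4}$ (ending with $1-\gamma^{1/4} \ge \tfrac{1-\gamma}{4}$), whereas you keep rational expressions and close with the cubic inequality $16(3\gamma+1)\le(3+\gamma)^3$.
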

The strong monotonicity property ensures $\thetastr$ is the unique solution to $F(\thetastr) = \mathbf{0}$.
We next establish the Lipschitz continuity for $F(\cdot)$.
\begin{lemma} \label{lem:F_lipschitz}
    Under the same hypothesis as Proposition~\ref{prop:proj_bellman_solvable}, then for all $\theta,\theta' \in \mathbb{R}^d$,
        $\|F(\theta) - F(\theta')\|_2 
        \leq 
        2\sqrt{\LamMax}\|Q-Q'\|_W 
        \leq 
        2\LamMax\|\theta-\theta'\|_2$. 
\end{lemma}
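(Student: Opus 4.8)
The plan is to exploit the identity $F(\theta)-F(\theta')=\Phi^TW(I-\gamma P^\pi)(Q-Q')$, which follows immediately from the definition of $F$ in~\eqref{eq:lin_op} and linearity (recall $Q=\Phi\theta$, $Q'=\Phi\theta'$). Writing $W=W^{1/2}W^{1/2}$ and inserting it between $\Phi^T$ and the remaining factor, submultiplicativity of the spectral norm gives $\|F(\theta)-F(\theta')\|_2\le \|\Phi^TW^{1/2}\|_2\,\|(I-\gamma P^\pi)(Q-Q')\|_W$, where I use that $\|W^{1/2}y\|_2=\|y\|_W$ with $\|y\|_W:=\sqrt{y^TWy}$. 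Since $w$ is a probability distribution over $\cZ$, every entry obeys $w(z)\le 1$, so $\|W^{1/2}\|_2\le 1$ and hence $\|\Phi^TW^{1/2}\|_2=\|W^{1/2}\Phi\|_2\le\|\Phi\|_2=\Omega$. The whole problem therefore reduces to the single $W$-norm estimate $\|(I-\gamma P^\pi)(Q-Q')\|_W\le 2\|Q-Q'\|_W$; once this is in hand, the first inequality is immediate, and the second follows from $\|Q-Q'\|_W=\|W^{1/2}\Phi(\theta-\theta')\|_2\le\Omega\|\theta-\theta'\|_2$.

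By the triangle inequality $\|(I-\gamma P^\pi)x\|_W\le\|x\|_W+\gamma\|P^\pi x\|_W$, so it suffices to prove the near-contraction $\gamma\|P^\pi x\|_W\le\|x\|_W$ for every $x$. This is the crux and the main obstacle, because $W$ is built from the discounted visitation distribution $\kappa^\pi_{\sor,f}$ rather than the stationary distribution, so $P^\pi$ is not a priori non-expansive in $\|\cdot\|_W$. I would proceed in two moves. First, since each row of the stochastic matrix $P^\pi$ is a probability vector, Jensen's inequality applied to $t\mapsto t^2$ gives $\|P^\pi x\|_W^2\le\|x\|^2_{(P^\pi)^Tw}$, i.e. the weight is pushed backward through $P^\pi$. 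Second, I would compute $(P^\pi)^Tw$ explicitly: using $w(s,a)=\kappa^\pi_{\sor,f}(s)\piexplact(a\vert s)$ and $P^\pi((s,a),(s',a'))=\cP(s'\vert s,a)\pi(a'\vert s')$, one finds $((P^\pi)^Tw)(s',a')=\pi(a'\vert s')\,(\bar P^T\kappa)(s')$, where $\kappa:=\kappa^\pi_{\sor,f}$ and $\bar P$ is the state-to-state kernel induced by the action-exploration policy $\piexplact$.

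It then remains to bound the entrywise ratio $((P^\pi)^Tw)(z)/w(z)$ by $\gamma^{-2}$, which factors as $\tfrac{\pi(a'\vert s')}{\piexplact(a'\vert s')}\cdot\tfrac{(\bar P^T\kappa)(s')}{\kappa(s')}$. The action factor is at most $(1-\epsact)^{-1}$ because $\piexplact(a'\vert s')\ge(1-\epsact)\pi(a'\vert s')$. For the state factor I would split $\bar P=(1-\epsact)\bar P^\pi+\epsact\bar P^{\mathcal U}$ into its on-policy and uniform-action parts, and invoke the defining recursion of the discounted visitation measure, $\kappa=(1-\gamma)\rho_0+\gamma(\bar P^\pi)^T\kappa$ with $\rho_0:=(1-f)\delta_{\sor}+f\,\mathcal U(\cS)$; this yields $\gamma(\bar P^\pi)^T\kappa\le\kappa$ entrywise, while the uniform part is controlled using the pointwise lower bound $\kappa(s')\ge(1-\gamma)f/\vert\cS\vert$ coming from the $f$-mixing term, together with the hypothesis $\epsact\le(1-\gamma)\min_{s}\kappa^\pi_{\sor,f}(s)/4$ from Proposition~\ref{prop:proj_bellman_solvable}. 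Tracking the $\gamma^2$ factor throughout, these estimates combine to give $\gamma^2((P^\pi)^Tw)(z)\le w(z)$, i.e. the desired $\gamma\|P^\pi x\|_W\le\|x\|_W$; plugging back in yields $\|(I-\gamma P^\pi)(Q-Q')\|_W\le 2\|Q-Q'\|_W$ and hence both claimed inequalities. I expect the constant bookkeeping in this last step---verifying that the two perturbation terms remain below the threshold that keeps the ratio at most one---to be the only delicate part, and I note that this same near-contraction is presumably what underlies the $\tfrac{1-\gamma}{4}$ modulus in Lemma~\ref{lem:monotone_with_disc_visit}, so the two proofs share this core estimate.
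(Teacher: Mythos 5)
Your proposal is correct and takes essentially the same approach as the paper: the paper's one-line proof combines the outer algebra of \cite[Lemma A.5]{li2024stochastic} with Lemma~\ref{lem:nearly_nonexpand}, whose statement $\|P^\pi u\|_W^2 \leq \gamma^{-3/2}\|u\|_W^2$ is precisely the near-contraction you re-derive inline (your target $\gamma^{-2}$ is looser but equally sufficient, and your Jensen-plus-entrywise-ratio argument via the visitation recursion and the $\epsact$ cap is the same mechanism as the paper's ``time-shift'' proof of that lemma). Your closing remark that this same estimate underlies the modulus in Lemma~\ref{lem:monotone_with_disc_visit} is exactly right --- the paper factors it out as Lemma~\ref{lem:nearly_nonexpand} and reuses it in both places.
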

\noindent In other words, $F(\cdot)$ is an $L$-Lipschitz continuous and $\mu$-strongly monotone operator, where
\begin{talign} \label{eq:F_mu_L_defn}
    L := 2\LamMax \quad \text{and} \quad \mu := \frac{(1-\gamma)\LamMin}{4}.
\end{talign}
Combining Lemma~\ref{lem:monotone_with_disc_visit} with Proposition~\ref{prop:proj_bellman_solvable} yields the useful bound
\begin{talign} \label{eq:otheta_bnd}
    \|\otheta\|_2^2 \leq \oTheta^2 \quad \text{where} \quad \oTheta:= \frac{2}{\sqrt{(1-\gamma)\mu}}.
\end{talign}

\subsection{Constructing the stochastic temporal difference operator} \label{sec:sto_hF_defn}
We will now show how to construct a stochastic operator for~\eqref{eq:lin_op} in the online model.
At a high level, we will draw approximate samples of $w(\cdot)$ from~\eqref{eq:w_dist_defn}.
First, fix an arbitrary max trajectory length, $m \geq 1$.
\newline

\noindent \textbf{Step 1: Random geometric time}.
Draw an iid random variable $\tilde{t}$ according to $\tilde{t} \sim \mathrm{Geo}(1-\gamma)$.
If $\tilde t \geq m$, we set the stochastic operator as $\hF(\theta) = \mathbf{0}$ (see~\eqref{eq:hF_def}).
Otherwise, we proceed to Step 2.

\noindent \textbf{Step 2: Approximate sample of the discounted visitation distribution}.
Denote $s_0$ as the most recently observed state from the online model.
Starting at state $s_0$, we sample actions according to
\begin{talign*}
    a_t &\sim \piexplst(\cdot \vert s_t), \quad t = 0,\ldots,t'-1,
    \quad
\end{talign*}
where we write $t' := \tau^{\piexplst}(\sor) + \tilde t$ for notational convenience, and we recall $\tau^{\piexplst}(\sor) := \tau^{\tpi}_{\infty}(\sor)$ (we drop the subscript $\infty$ for simplicity) is the first hitting time of $\sor$ w.r.t.~the exploration policy $\piexplst$, defined in~\eqref{eq:hitting_time}.
The last observed state $s_{t'}$ is an approximate sample from $\kappa_{\sor}^{\piexplst}$.

\noindent \textbf{Step 3: Operator construction}.
From state $s_{t'}$, we build our stochastic operator $\hF(\cdot)$ identically to~\cite[Section 4.2.2]{li2024stochastic}, which we repeat for completeness.
Sample two more actions according to
\begin{talign*}
    a_{t'} \sim \piexplact(\cdot \vert s_{t'}) 
    \quad \text{and} \quad
    a_{t'+1} \sim \pi(\cdot \vert s_{t'+1}).
\end{talign*}
After applying action $a_{t'}$, we observe both the cost $c(s_{t'}, a_{t'})$ and the next state $s_{t'+1}$.
We collect all the observed quantities as $\xi_{t'} = \{(s_{t'},a_{t'}), (s_{t'+1},a_{t'+1}), c(s_{t'},a_{t'})\}$.
Then our stochastic operator, which depends on parameters ($\sor$, $m$, $\epsst$), is defined as
\begin{talign}\label{eq:hF_def}
    \hF(\theta) := 
        \tF(\theta, \xi_{t'}) \cdot \mathbf{1}[\tilde t < m],
\end{talign}
where we denote $z_t := (s_t,a_t)$ and $\tF(\theta, \xi_t) := \phi(z_{t})[\phi(z_{t})^T\theta - c(z_{t})- \gamma \phi(z_{t+1})^T \theta]$. Multiplication by the indicator variable corresponds to $\hF(\theta) = \mathbf{0}$ when $\tilde t \geq m$, as described in Step 1. 
This completes the construction of the stochastic operator $\hF(\cdot)$.
\newline

Observe that the number of samples to build $\hF(\cdot)$ is equal to the dynamic exploration time,
\begin{talign} \label{eq:dyn_mixing_time_fa}
    \tilde{m}^{\piexplst}(\sor, \tilde t, m) := [\tau^{\piexplst}(\sor) + \tilde t + 2] \cdot \mathbf{1}[\tilde t \leq m].
\end{talign}
Indeed, this dynamic exploration time is similar to the tabular case~\eqref{eq:dyn_mixing_time}, where both are random variables because they depend on the hitting time $\tau^{\tpi}(\cdot) := \tau^{\tpi}_\infty(\cdot)$ from~\eqref{eq:hitting_time}. 
But there are still a couple of key differences.
First,~\eqref{eq:dyn_mixing_time_fa} only requires visiting the fixed origin state $\sor$, while~\eqref{eq:dyn_mixing_time} must visit every state.
Second,~\eqref{eq:dyn_mixing_time_fa} is defined w.r.t.~the perturbed policy $\piexplst$, while~\eqref{eq:dyn_mixing_time} is defined w.r.t.~the policy $\pi$, which leads to off-policy sampling when $\epsst > 0$ since $\piexplst \ne \pi$.

Let us now discuss the statistical properties of $\hF(\cdot)$. We start by showing it is a nearly unbiased estimator of $F(\cdot)$ in the next lemma. 
Recall $L$ from~\eqref{eq:F_mu_L_defn}, the optimal weights $\thetastr$, and its norm upper bound $\oTheta$ from~\eqref{eq:otheta_bnd}. 
A proof for the following result can be found in~\ref{sec:missing_sto_hF_defn}.
\begin{lemma} \label{lem:F_bias}
    Denoting $C_1 := \vert \cZ \vert^2 L$, then for all $\theta,\theta' \in \mathbb{R}^d$, we have
    \begin{talign*}
        &\|\mathbb{E}[\hF(\theta) - \hF(\theta') \vert \tau^{\piexplst}(\sor)] - (F(\theta) - F(\theta'))\|_2
        \\
        &\leq
        C_1 \gamma^m\|\theta-\theta'\|_2.
    \end{talign*}
    If the hypothesis from Proposition~\ref{prop:proj_bellman_solvable} holds, then with $\sigma_1 := C_1 \oTheta$, we have
        $\|\mathbb{E}[\hF(\thetastr) \vert \tau^{\piexplst}(\sor)] - F(\thetastr)\|_2
        \leq
        \sigma_1 \cdot \gamma^m$.
\end{lemma}

\noindent We next show the estimator is light-tailed with the following absolute bound.
\begin{lemma} \label{lem:F_lighttail}
    Denoting $C_2 := 4L^2$, then for all $\theta,\theta' \in \mathbb{R}^d$, we have
    \begin{talign*}
        &\|[\hF(\theta) - \hF(\theta')] - [F(\theta) - F(\theta')]\|_2^2 
        \\
        &\leq
        C_2 \|\theta - \theta'\|_2^2.
    \end{talign*}
    If the hypothesis from Proposition~\ref{prop:proj_bellman_solvable} holds, then with $\sigma_2 := \sqrt{C_2}\oTheta$, we have
        $\|\hF(\thetastr) - F(\thetastr)\|_2^2 
        \leq
        \sigma_2^2$.
\end{lemma}
\begin{proof}
    The first inequality can be shown using the same proof for Lemma~\ref{lem:F_lipschitz} and $(a+b)^2 \leq 2(a^2+b^2)$. 
    The second can be shown similarly along with the upper bound on $\|\otheta\|_2$ from~\eqref{eq:otheta_bnd}.
\end{proof}

Finally, recall that the number of samples to construct $\hF(\cdot)$ is based on a dynamic exploration time defined in~\eqref{eq:dyn_mixing_time_fa}.
We will now show that this random quantity exhibits a light-tail behavior.
\begin{lemma}\label{lem:hF_cost}
    Consider estimating $\bar k$ operators $\hF(\cdot)$ as described in Steps 1-3 above for exploration policies $\{\piexplst_i\}_{i=0}^{\bar k-1}$.
    For each $i$, let $\sor^{(i)}$ be the origin state, let $\tilde{t}^{(i)}$ be the random variable sampled from Step 1, and let $m^{(i)}$ be the max trajectory.
    Then the total number of state-action pairs observed is
        $B_{\bar k} := \sum_{i=0}^{\bar k-1} \tilde{m}^{\piexplst_i}(\sor^{(i)}, \tilde{t}^{(i)}, m^{(i)})$,
    which is the sum of dynamic exploration times from~\eqref{eq:dyn_mixing_time_fa} over all $\bar k$ policies.
    Furthermore,
    \begin{talign*}
        \Pr\big\{ B_{\bar k }
        > 
        \sum_{i=0}^{\bar k-1} (m^{(i)} + 1) + \tilde{O}(\frac{\bar k \cdot \log(\bar k/\delta)}{(1-\torho) \tunu_{or}} ) \big
        \}
        \leq
        \delta,
    \end{talign*}
    where $\torho := \max_{0 \leq i \leq \bar k-1} \rho^{\piexplst_i}$ and $\tunu_{or} := \min_{0 \leq i \leq \bar k-1} \nu^{\piexplst_i}(\sor^{(i)})$. 
\end{lemma}
\begin{proof}
    The dynamic exploration time $\tilde{m}^{\piexplst_i}(\sor^{(i)}, \tilde{t}^{(i)}, m^{(i)})$ is at most the sum of the random hitting time $\tau^{\piexplst}(\sor)$ and random truncation length $\tilde t + 2$.
    The hitting times can be bounded using~\eqref{eq:hitting_time_concentration}.
    Because we do not sample if $\tilde t \geq m^{(i)}$ (Step 1), then the latter summand has an absolute bound of $\tilde{m}^{(i)} + 1$.  
\end{proof}

The next section repeatedly builds this stochastic operator to solve the fixed-point equation~\eqref{eq:lin_op}.

\subsection{A conditional temporal difference with high-probability convergence} \label{sec:anytime_ctd}
Our policy evaluation algorithm is conditional temporal differencing (CTD), shown in Algorithm~\ref{alg:simple_ctd}. 
The algorithm iteratively updates the variable $\theta_\ell$, used to define the function approximation $Q_\ell = \Phi \theta_\ell$, by subtracting from it the stochastic operator $\hF(\theta_\ell)$ scaled by a stepsize $\iota_\ell$ over $N$ iterations.
Now, the construction of $\hF(\cdot)$ depends on four algorithmic parameters.
State $\sor$ is an arbitrarily chosen origin state.
The name ``conditional'' means that convergence is guaranteed if the algorithmic parameter $m_\ell$ satisfies a condition defined in Lemma~\ref{lem:ctd_abs_err}, similar to~\cite{kotsalis2022simple}.
The third input, $\epsst$, defines the exploration policy, $\tpi$.
While this CTD method is based on~\cite{kotsalis2022simple,lan2023policy}, the analysis is novel, since we derive high-probability bounds and use a new operator estimator.
\begin{algorithm}
\caption{}
\label{alg:simple_ctd}
\begin{algorithmic}[1]
\Procedure{CTD}{$N$, $\sor$, $\{m_\ell\}_\ell$, $\epsst$}
    \For{$\ell=0,\ldots,N-1$} \Comment{Initialize $\theta_0 = \mathbf{0} \in \mathbb{R}^d$}
        \State{Estimate $\hF(\theta_\ell)$ using parameters ($\sor$, $m_\ell$, $\epsst$)} \Comment{See~\eqref{eq:hF_def}}
        \State{Update $\theta_{\ell+1} = \theta_\ell - \iota_\ell \hF(\theta_\ell)$}
    \EndFor
    \State{\Return ${Q}_{N} := \Phi \theta_N$}
\EndProcedure
\end{algorithmic}
\end{algorithm}

We will now show that the distance to optimality decreases at a sublinear rate of $O(N^{-1/2})$.
The result will make use of constants $C_1,\sigma_1$ from Lemma~\ref{lem:F_bias}, $C_2,\sigma_2$ from Lemma~\ref{lem:F_lighttail}, as well as $L,\mu$ from~\eqref{eq:F_mu_L_defn} and $\oTheta$ from~\eqref{eq:otheta_bnd}.
Proofs for the following technical results can be derived from existing works~\cite{kotsalis2022simple,lan2023policy}, and we provide details in~\ref{sec:missing_anytime_ctd}.
\begin{lemma} \label{lem:ctd_abs_err}
    Suppose the hypothesis from Proposition~\ref{prop:proj_bellman_solvable} holds. 
    Choose algorithmic parameters
    \begin{talign*}
        &\ell_0 := \max\{\frac{8L^2}{\mu^2}, \frac{32C_2}{\mu^2}\}
        \quad \iota_\ell := \frac{2}{(\ell_0+\ell)\mu}, 
        \\
        &\text{and} 
        \quad \unm_\ell := \frac{\log(1/\mu) + \log(16C_1) + \log(\sigma_1/[\iota_\ell \sqrt{C_2}])}{\log(1/\gamma)}.
    \end{talign*}
    If $m_\ell \geq \unm_\ell$, then for any $\delta \in (0,1)$,
    \begin{talign*}
        \Pr\{\exists k \in [N] ~:~ \|\theta_k - \otheta\|_2^2
        >
        \mathcal{B}(k,\delta) \}
        \leq \delta,
    \end{talign*}
    where $\mathcal{B}(k,\delta) := \frac{2(\ell_0+1)(\ell_0+2)\|\theta_0 - \otheta\|_2^2}{(k+\ell_0+1)(k+\ell_0+2)} + \frac{16\sigma^2}{k\mu^2} + \frac{16\sqrt{C_2R_{[\delta]}^2\log(\frac{N}{\delta})}}{\sqrt{k}\mu}$, $\sigma := \sqrt{2\sigma_2^2+1}$. 
    Moreover, we have the absolute bound $\mathcal{B}(k,\delta) \leq (R_{[\delta]})^2 := 2\oTheta^2 + \frac{16\sigma^2}{\mu^2} + \frac{32\sqrt{C_2 \log(\frac{N}{\delta})}}{\mu}$.
\end{lemma}

Next, we will show the bias of $\theta_N$ can be made smaller at a faster linear rate.
\begin{lemma} \label{lem:ctd_bias}
    Consider the same setup as Lemma~\ref{lem:ctd_abs_err}, where we denote $R := R_{[1]}$. Then for any $\ell \leq N$,
    \begin{talign*}
        &\|\mathbb{E}\theta_{\ell} - \otheta\|_2^2
        \\ &\leq
        8^{-\lfloor \ell/ \ell_0 \rfloor}\|\theta_0 - \otheta\|_2^2 + \frac{16C_1R^2 \gamma^m}{\mu} + \frac{4C_1^2 R^2 \gamma^{2m}}{\mu^2}.
    \end{talign*}
\end{lemma}
Note that we still need to estimate the unknown $\mu$ to specify algorithmic parameters $\ell_0$, $\iota_\ell$, and $\unm_\ell$ in CTD.
The next section shows how to do this with little to no a priori knowledge of $\mu$.

\subsection{Policy optimization with linear function approximation} \label{sec:auto_linfun}
We now consider stochastic policy mirror descent (SPMD) from~\eqref{eq:spmd_update}, where $Q^{\pi_t}$ is estimated by CTD (Algorithm~\ref{alg:simple_ctd}) for policy evaluation.
As previously mentioned, choosing algorithmic parameters in CTD depends on the unknown strong monotonicity constant $\mu$ from~\eqref{eq:F_mu_L_defn}.
To estimate this parameter, suppose instead we have access to a lower bound $\ukappa$ s.t.
\begin{talign} \label{eq:ukappasor_defn}
    \ukappa \leq \min_{0 \leq t \leq k-1} \min_{s \in \cS} \kappa^{\piexplst_t}_{\sor}(s),
\end{talign}
where $k$ is the number of SPMD iterations, $\sor \in \cS$ is a user-chosen origin state, and $\tpi_t$ is the exploration policy of~policy $\pi_t$.
Although $\ukappa$ appears to be algorithm-dependent since it is defined w.r.t.~$\pi_t$, we will show it admits an algorithm-independent bound after Theorem~\ref{thm:pmd_ctd}.
This bound is omitted for now to simplify the result.

With some abuse of notation, we denote $\oTheta(\ukappa) := \frac{2}{\sqrt{(1-\gamma)\mu(\ukappa)}}$, which is exactly as $\oTheta$ is defined in~\eqref{eq:otheta_bnd} but with $\mu$ replaced by its lower bound estimate
\begin{talign*}
    \mu(\ukappa) := \frac{(1-\gamma)\unLam(\ukappa)}{4} \quad \text{and} \quad \unLam(\ukappa) := \ukappa \cdot \lambda_{\min}(\Phi^T \Phi).
\end{talign*}
Similarly, parameters $\ell_0(\ukappa)$, $\unm_\ell(\ukappa)$, $\sigma(\ukappa)$, and $R_{[\delta]}(\ukappa)$ are exactly as defined in Lemma~\ref{lem:ctd_abs_err} but with any dependence on the unknown $\mu$ replaced by $\mu(\ukappa)$, and likewise with $R_{[\delta]}(\ukappa)$ from Lemma~\ref{lem:ctd_bias}. Parameter $R_{[\delta]}(\cdot)$ depends on $\log(N)$, but we omit its dependence on $N$ for notational convenience.
Then, we set the CTD iterations $N$ as the smallest integer satisfying
\begin{talign}
    N &\geq \max\big\{B_1(\ukappa, N, \delta), ~~ 2\ell_0(\ukappa) \log_8\big( \frac{72 \sqrt{3} \oTheta(\ukappa)^2}{(1-\gamma)\epsilon} \big)\big\}, \label{eq:ctd_setup_params}
\end{talign}
where $\epsilon \in (0,(1-\gamma)^{-1})$ and $\delta \in (0,1)$ are user-defined, and 
    $B_1(\ukappa, N, \delta) := 
      \frac{\sqrt{6(1-\gamma)^2 (\ell_0(\ukappa)+2)}}{\sqrt{\oTheta(\ukappa)^2}}
      + \frac{(48)^2 (1-\gamma)^2 \sigma(\ukappa)^2}{\mu(\ukappa)^2}
      + \frac{(48)^2(1-\gamma)^4[R_{[\delta]}(\ukappa)]^2C_2 \log(\frac{4N}{\delta})}{\mu(\ukappa)^2}$,
and we recall $C_1 = \vert \cZ \vert^{2}L$ and $C_2 = 4L^2$.
At iteration $\ell$ of CTD, parameter $m_\ell$ is set to
\begin{talign} \label{eq:ctd_setup_params_b}
    m_\ell
    :=
    \Big \lceil \unm_\ell(\ukappa) + \frac{\log(4C_1 R(\ukappa)) + \log(1/\mu(\ukappa)) + \log(\frac{72\sqrt{3}}{(1-\gamma)\epsilon}) }{\log(1/\gamma)} \Big\rceil.
\end{talign}
With these parameters, we will establish the convergence of SPMD with function approximation.
We first introduce some notation for the $t$-iteration of SPMD.
Let $\oQ^{(t)} = \Phi \otheta^{(t)}$ be the best estimator w.r.t.~policy $\pi_t$.

\begin{theorem} \label{thm:pmd_ctd}
    Consider any $\epsilon \in (0,\frac{1}{1-\gamma})$, $\delta \in (0,1)$, and $\ukappa > 0$.
    Run SPMD with parameters $k$ and $\eta$ defined as in~\eqref{eq:k_eps_defn}, where we set $\hat{Q}^2 = \frac{6}{(1-\gamma)^{2}} + \frac{102\|\Phi\|_2^2}{(1-\gamma)^2\unLam(\ukappa)^2}$ and choose any Bregman divergence.
    Meanwhile, let us set CTD parameters $N$ to~\eqref{eq:ctd_setup_params}, $\sor \in \cS$ as an arbitrary state, $m$ to~\eqref{eq:ctd_setup_params_b}, and $\epsact = \frac{1-\gamma}{2}$.
    If Assumption~\ref{asmp:optimal_mixing} takes place and $\ukappa$ satisfies~\eqref{eq:ukappasor_defn}, then with probability $1-\delta$, both of the following occur.
    First, we have for all states $s \in \cS$,
    \begin{talign} 
        V^{\pi_k}(s) - V^{\pi^*}(s)
        \leq
        \big[\epsilon + \frac{36\epsapproxinf}{1-\gamma}\big](\log \frac{8k \vert \cS \vert}{\delta})^2,
        \label{eq:spmdctd_case_2}
    \end{talign}
    where $\epsapproxinf := \max_{0 \leq t < k}\|Q^{\pi_t} - \oQ_t\|_\infty$.
    Second, the total number of samples is at most 
    \begin{talign*}
        &\overbrace{\tilde{O}\big(
        \tfrac{\bar{D}_0 (\hat{Q}^2 + M_h^2) [\log(1/\delta)]^{3/2}}{(1-\gamma)^5 \ukappa^6 \epsilon^2} \big[ \tfrac{1}{1-\gamma} + \tfrac{\log(1/\delta)}{(1-\torho)\tunu_{or}} \big] \big)}^{\text{alg-dependent bound}}
        \\ &\leq
        \underbrace{\tilde{O}\big\{ \tfrac{\bar{D}_0 (\hat{Q}^2 + M_h^2) [\log(1/\delta)]^{5/2}}{(1-\gamma)^{5}\ukappa^6 \epsilon^2 } \cdot \Dexpl(\delta) \big\}}_{\text{alg-independent bound}},
    \end{talign*}
    \sloppy where $\max_{s \in \cS} D^{\pi^*}_{\pi_0}(s) \leq \bar{D}_0$;
    $\Dexpl(\cdot)$ is from~\eqref{eq:Dexpl_defn}; $\torho := \max_{0 \leq i \leq  k-1} \rho^{\piexplst_i}$ and $\tunu_{or} := \min_{0 \leq i \leq k-1} \nu^{\piexplst_i}(\sor)$; and $\tilde{O}(\cdot)$ hides dependence on algorithm-independent and polylogarithmic terms.
\end{theorem}
\begin{proof}
    Let $\LamMin^{(t)}$ be defined as in~\eqref{eq:uW_defn}, $\tQ^{(t)}$ as the output of CTD, and recall $\oQ^{(t)}$ as the best estimator w.r.t.~policy $\pi_t$, where $\tQ^{(t)} = \Phi \theta^{(t)}$ and $\oQ^{(t)} = \Phi \otheta^{(t)}$.
    In view of Lemma~\ref{lem:ctd_abs_err} and Lemma~\ref{lem:ctd_bias}, the choice of $N$ and $m$ provides the guarantees,
    \begin{talign*}
        \mathbb{E}\|\tQ^{(t)} - \oQ^{(t)}\|_\infty^2
        &\leq
        \|\Phi\|_2^2 \|\theta^{(t)} - \otheta^{(t)}\|_2^2
        \leq
        \frac{1}{(1-\gamma)^2}
        \\
        \|\mathbb{E}\tQ^{(t)} - \oQ^{(t)}\|_\infty
        &\leq
        \|\Phi\|_2 \|\mathbb{E}\theta^{(t)} - \otheta^{(t)}\|_2
        \leq
        \frac{(1-\gamma)\epsilon}{72},
    \end{talign*}
    which occurs with probability $1-\delta/4$ across all SPMD iterations $t=0,\ldots,k-1$.
%
    Now, the approximation error can be bounded by
    \begin{talign*}
        \|\oQ^{(t)} - Q^{\pi_t}\|_\infty^2
        &\leq
        \frac{\|\Phi\|_2^2}{\LamMin^{(t)}}\|\oQ^{(t)} - Q^{\pi_t}\|_W^2
        \\ &\leq
        \frac{2\|\Phi\|_2^2}{\unLam(\ukappa)}[\|\oQ^{(t)}\|_W^2 + \|Q^{\pi_t}\|_W^2]
        \\ &\leq
        \frac{34\|\Phi\|_2^2}{(1-\gamma)^2\unLam(\ukappa)},
    \end{talign*}
    where the first inequality is $\|\cdot\|_\infty \leq \|\cdot\|_2$ with Lemma~\ref{lem:F_lipschitz} and Lemma~\ref{lem:monotone_with_disc_visit}, 
    the second inequality by $\unLam(\ukappa) \leq \LamMin$, 
    and the last inequality is with the help of Proposition~\ref{prop:proj_bellman_solvable}.
    With this bound in hand, we then have 
    \begin{talign*}
        &\|\tQ^{(t)}\|_\infty^2 
        \leq
        3\mathbb{E}\|\tQ^{(t)} - \oQ^{(t)}\|_\infty^2 + 3\|\oQ^{(t)} - Q^{\pi_t}\|_\infty^2 
        \\ & + 3\|Q^{\pi_t}\|_\infty^2
        \leq
        \frac{6}{(1-\gamma)^{2}} + \frac{102\|\Phi\|_2^2}{(1-\gamma)^2\unLam(\ukappa)}
        \\
        &\|\mathbb{E} \tQ^{(t)} - Q^{\pi_t}\|_\infty
        \leq
        \|\mathbb{E} \tQ^{(t)} - \oQ^{(t)}\|_\infty 
        \\ &+ \|\oQ^{(t)} - Q^{\pi_t}\|_\infty
        \leq
        \frac{(1-\gamma)\epsilon}{72} + \epsapproxinf.
    \end{talign*}
    We obtain~\eqref{eq:spmdctd_case_2} after plugging these bounds into Proposition~\ref{prop:stronger_gap_converge_with_relaxed}.

    Finally, our choice of algorithmic parameters can be bounded as
        $k = O\{\frac{\bar{D}_0 (\oQ^2 + M_h^2)}{(1-\gamma)^2 \epsilon^2}  \}$,
        $N = \tilde{O}\{\frac{[\log(1/\delta)]^{3/2}}{(1-\gamma)^3 \ukappa^6} \}$,
        and
        $m_\ell = \tilde{O}\{ \frac{1}{1-\gamma}\}$,
    and the overall algorithm-dependent sample complexity can be deduced from Lemma~\ref{lem:hF_cost} with $\bar{k} := k \cdot N$.
    The algorithm-independent bound can be shown exactly as in the proof for Theorem~\ref{thm:implicit_convergence} (which requires Assumption~\ref{asmp:optimal_mixing}), where our choice of $\epspi = (1-\gamma)/2$ ensures that $\tpi(a \vert s) \geq (1-\gamma)/(2\vert \cA \vert)$ for all $(s,a) \in \cZ$.
\end{proof}

A few remarks are in order.
First, Assumption~\ref{asmp:optimal_mixing} can be relaxed from an optimal deterministic policy to any optimal (including randomized) policy, which can be shown by integrating explicit exploration ($\epspi > 0$) into Proposition~\ref{prop:pi_t_mixing}.
It is also possible to achieve auto-exploration in a nearly on-policy manner ($\epspi=0$) when the function approximation error $\epsapprox$ is smaller than $(1-\gamma)\epsilon/36$. 
But the sampling approach is more complicated, so we omit it for simplicity.

Second, we can derive algorithm-independent bounds for $\ukappa$.
For instance, suppose $\sor$ is set to a uniform random state, $\sor \sim \mathrm{Uni}(\cS)$, resulting in $\kappa^{\tpi}_{\mathrm{Uni}(s)}(s) := \vert \cS \vert^{-1} \sum_{q \in \cS} \kappa_{q}^{\tpi}(s) \geq (1-\gamma)/\vert \cS \vert$.
Thus, we can set $\ukappa = (1-\gamma)/\vert \cS \vert$.
However, this choice of $\sor$ may require visiting every state.
Alternatively, consider any fixed $\sor \in \cS$.
A proof for the following result is in~\ref{sec:missing_auto_linfun}.
\begin{corollary} \label{cor:kappa_lb}
    Recall $\unu^* = \min_{s \in \cS} \nu^*(s)$ and $\ob = \lceil \frac{\log(C^*/\unu^*)}{\log(1/\rho^*)} \rceil$ from Proposition~\ref{prop:pi_t_mixing}.
    Under the same setup as Theorem~\ref{thm:pmd_ctd}, we have for any iteration $t$ and $\sor,s \in \cS$,
    \begin{talign*}
        \kappa^{\tpi_t}_{\sor}(s) 
        \geq 
        \frac{(1-\gamma)\unu^*}{2}\big( \frac{(1-\gamma)\gamma}{\vert \cA \vert}\big)^{\ob}.
    \end{talign*}
\end{corollary}
While $\ukappa$ can be set as the lower bound in the corollary above, this bound depends on the unknown $\unu^*$.
We can guess-and-check different $\ukappa$'s, starting from $\ukappa = 1$. 
After running SPMD+CTD with some $\ukappa$, we estimate the optimality gap using the so-called advantage gap function, $g^{\pi}(s) := \max_{p \in \DA} \{-\psi^\pi(s,p) \}$ (where $\psi$ is from Lemma~\ref{lem:performance_diff_deter}), which provides the tight bounds, 
\begin{talign*}
    g^\pi(s) \leq V^{\pi}(s) - V^{\pi^*}(s) \leq (1-\gamma)^{-1} g^\pi(s).
\end{talign*}
See~\cite{ju2024strongly} for the derivation and estimation error bounds, since $g^{\pi}$ can only be estimated in the online setting.
Now, once $g^\pi(s) \leq O\{(1-\gamma)\epsilon\}$, we can terminate. 
Otherwise, we halve $\ukappa$ and re-run SPMD.
As long as the function approximation error is not too large and satisfies $\epsapproxinf \leq \frac{(1-\gamma)\epsilon}{36}$, this guess-and-check will terminate in view of the upper bound on $g^\pi(s)$ once $\ukappa$ satisfies~\eqref{eq:ukappasor_defn}.
Note that it is possible to satisfy $\epsapproxinf = 0$ using linear basis functions. 
Let $\Phi \in \mathbb{R}^{\vert \cZ \vert \times \vert \cZ \vert}$ be a random matrix with iid~Gaussian entries.
Since $\Phi$ is nonsingular with probability 1~\cite{rudelson2008invertibility}, we ensure $\mathrm{span}(\Phi) = \mathbb{R}^{\vert \cZ \vert}$. Consequently, $\epsapproxinf = 0$.

Having established the convergence of SPMD, we next examine its numerical performance.

\section{Numerical experiments} \label{sec:experiments}
We consider two sets of environments. 
One, when there is efficient state exploration, and two, when there is not.
First, we describe the algorithms being tested.

\subsection{Experimental setup}
For the tabular setting (i.e.,~without function approximation), we implemented SPMD MC-Est and SPMD MC-Dyn. 
These two SPMD methods use the same Monte Carlo estimator~\eqref{eq:montecarlo_Q} but with different trajectory lengths. 
These are, respectively, a length computed by estimating the mixing properties (using~\cite{hsu2019mixing}) and our dynamic exploration time from~\eqref{eq:dyn_mixing_time}.
Likewise, for linear function approximation (using matrices with iid Gaussian entries), we implemented SPMD+CTD-Est and SPMD+CTD-Dyn. 
The former sets the state-action distribution $W$ in~\eqref{eq:lin_op} using the stationary distribution and estimates it (using~\cite{hsu2019mixing}), while the latter uses our discounted visitation distribution from~\eqref{eq:w_dist_defn}. 
We also implemented SARSA~\cite{sutton1998reinforcement}, a variant of on-policy Q-learning with a tunable $\epsilon$-exploration parameter.
Algorithmic parameters are tuned using grid search. 
More details can be found in our open-source code at \url{https://github.com/jucaleb4/pg-termination}.

\subsection{Performance on environments with efficient state exploration}

\begin{figure}
    \begin{subfigure}{.49\textwidth}
    \centering
    \includegraphics[width=.75\linewidth]{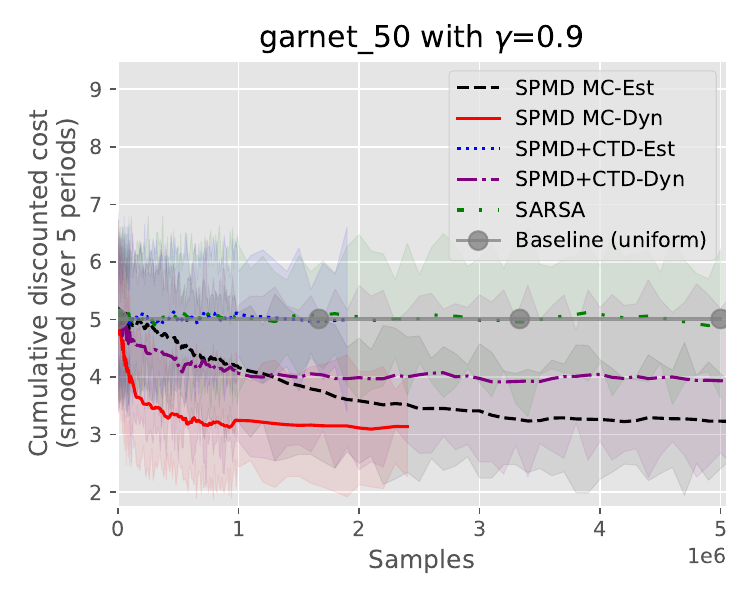}
    \end{subfigure}
    \begin{subfigure}{.49\textwidth}
    \centering
    \includegraphics[width=.75\linewidth]{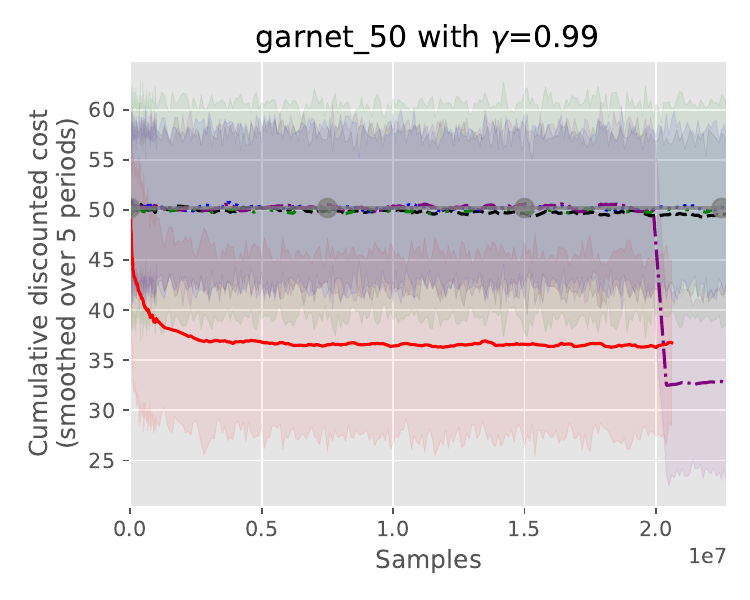}
    \end{subfigure}
    \\ 
    \begin{subfigure}{.49\textwidth}
    \centering
    \includegraphics[width=.75\linewidth]{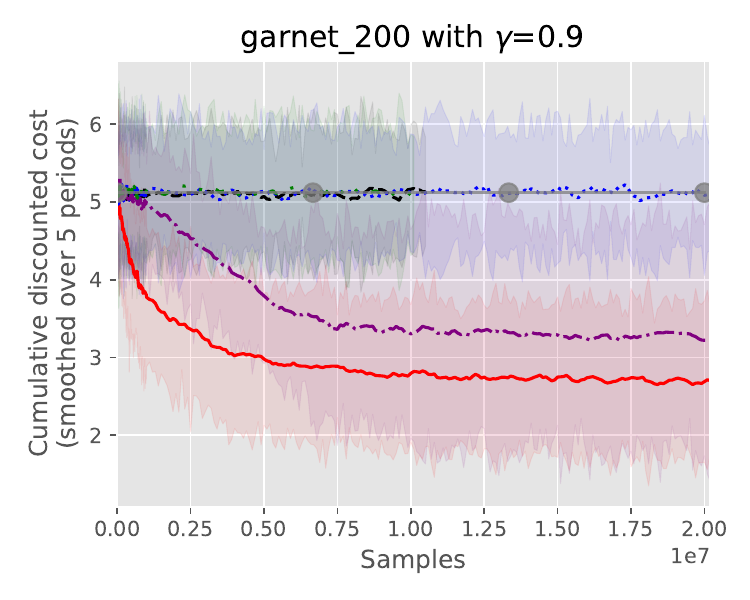}
    \end{subfigure}
    \begin{subfigure}{.49\textwidth}
    \centering
    \includegraphics[width=.75\linewidth]{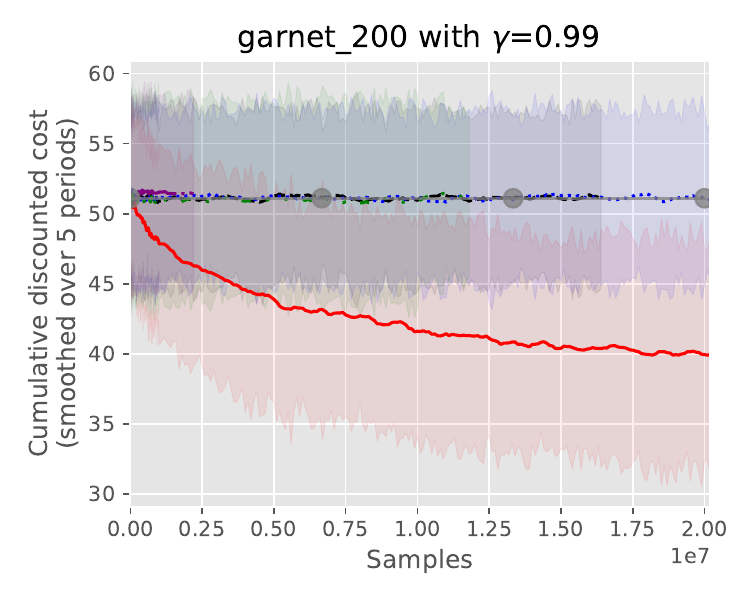}
    \end{subfigure}
    \caption{Estimated value function (lower is better) for GARNET. 
    The mean is shown in the line and the 95\% confidence interval in the shaded region over 10 seeds.
    The gray line with circles is the average baseline performance using a uniform policy.
    }
    \label{fig:garnet_online}
\end{figure}

The first environment is a random instance called GARNET~\cite{tarbouriech2021provably}, where we ran on a smaller ($\vert \cS \vert=50$ and $\vert \cA \vert=5$) and a larger ($\vert \cS \vert=200$ and $\vert \cA \vert=30$) instance.
As shown in Figure~\ref{fig:garnet_online}, SPMD MC-Dyn consistently outperforms all other methods.
In contrast, for garnet\_50 ($\gamma=0.99$), the best is SPMD+CTD-Dyn, which has a late but large improvement.
The reason for this behavior is that the best set of hyperparameters yields a high-accuracy policy evaluation (via a large sample count per CTD call), followed by a large policy update stepsize.
We tested the same hyperparameters on other problems but observed a more modest improvement.
Meanwhile, SPMD MC-Est’s performance degrades in larger environments or with larger discount factors.
Finally, both SPMD+CTD-Est and SARSA cannot improve beyond the baseline.

\begin{figure}
    \begin{subfigure}{.49\textwidth}
    \centering
    \includegraphics[width=.75\linewidth]{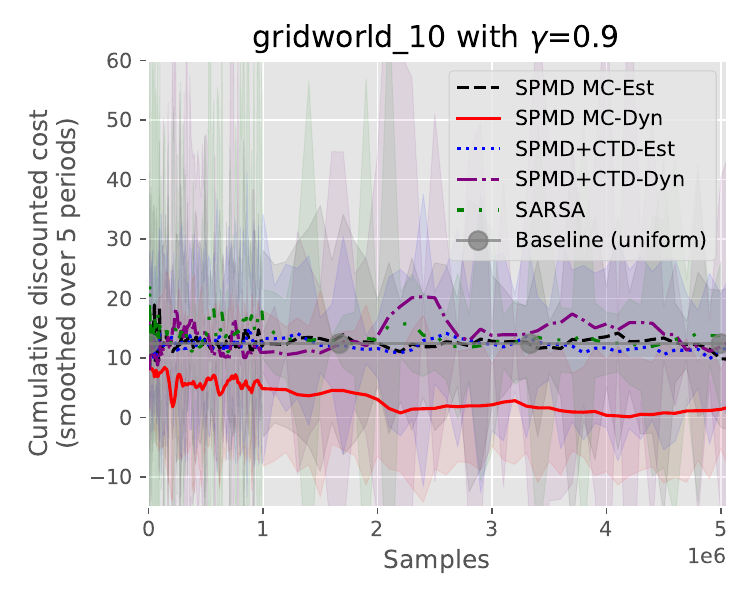}
    \end{subfigure}
    \begin{subfigure}{.49\textwidth}
    \centering
    \includegraphics[width=.75\linewidth]{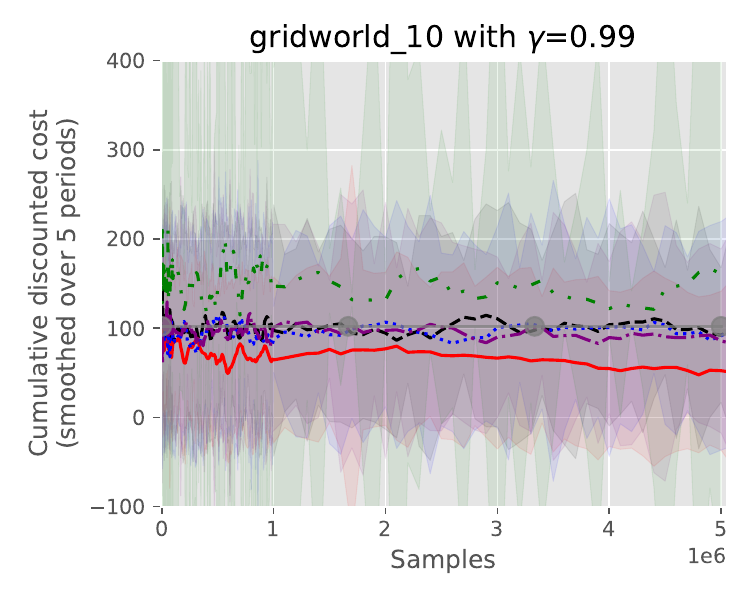}
    \end{subfigure}
    \\ 
    \begin{subfigure}{.49\textwidth}
    \centering
    \includegraphics[width=.75\linewidth]{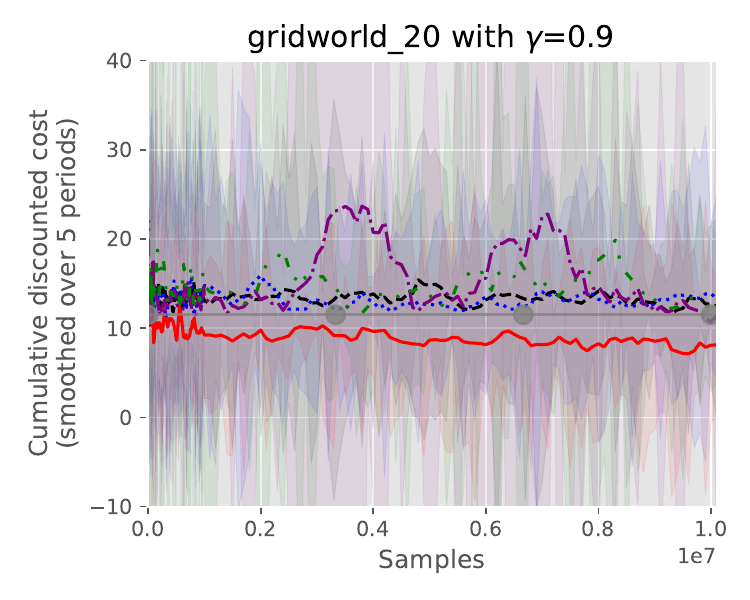}
    \end{subfigure}
    \begin{subfigure}{.49\textwidth}
    \centering
    \includegraphics[width=.75\linewidth]{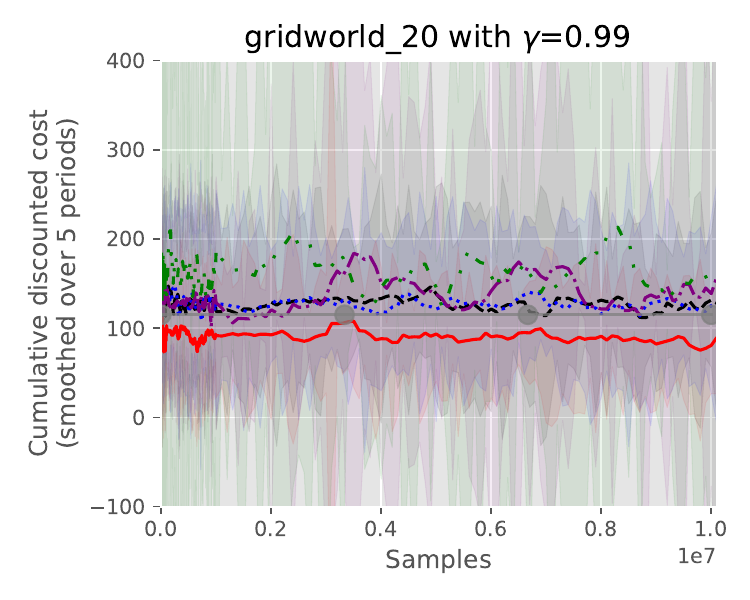}
    \end{subfigure}
    \caption{Estimated value function (lower is better) for GridWorld. Same setup as Figure~\ref{fig:garnet_online}.}
    \label{fig:gridworld_online}
\end{figure}

The second environment is a 2D route planning problem called GridWorld~\cite{dann2014policy}.
An agent moves in one of $\vert \cA \vert=4$ cardinal directions (with a 5\% chance of a uniform random action) from a random origin state to a target while avoiding traps.
We ran on a smaller problem of length $10$ ($\vert \cS \vert=100$) and a larger one of length $20$ ($\vert \cS \vert=400$).
As seen in Figure~\ref{fig:gridworld_online}, SPMD MC-Dyn is consistently the best performer.
On the other hand, SPMD+CTD-Dyn does not perform well here.
This may be because the problem lacks a low-dimensional structure that a linear function approximator seeks to exploit, resulting in slower convergence. 
Indeed, a similar poor performance using linear function approximation compared to nonlinear ones was also observed in~\cite{ju2022policy}.

\begin{table}
    \centering
    \caption{Per iteration cost of SPMD}
    \label{tab:samples_per_iter}
    \begin{tabular}{@{}lrrrr@{}} \toprule
    Env ($\gamma=0.9$) & MC-Dyn & MC-Est & CTD-Dyn & CTD-Est \\ \midrule 
    gridworld\_10 & $3.5 \times 10^4$ ($2.4$s) & $1.7 \times 10^7$ ($665$s) & $1.4 \times 10^5$ ($5.8$s) & $4.2 \times 10^7$ ($1885$s) \\
    gridworld\_20 & $4.0 \times 10^6$ ($223$s) & - (-) & $1.4 \times 10^6$ ($68$s) & - (-) \\
    garnet\_200 & $2.4 \times 10^4$ ($1.3$s) & $8.2 \times 10^5$ ($41$s) & $3.2 \times 10^4$ ($1.5$s) & $8.4 \times 10^6$ ($258$s) \\
    \bottomrule
    \end{tabular}
    \caption*{Average samples (runtime in seconds) per SPMD iteration with different policy evaluations.
    Both estimation-based approaches failed to finish one iteration in gridworld\_20 after $5 \times 10^7$ samples.}
\end{table}

The efficient performance of the dynamic exploration time is due to avoiding a conservative trajectory length.
As seen in Table~\ref{tab:samples_per_iter}, the dynamic approach uses fewer samples per policy evaluation (34x to 485x fewer samples than the estimation approach), yielding more policy updates.
The scale of the runtime reduction (run on one core of an Intel Xeon 6972P processor with 8GB of memory) matches the reduction in sample complexity, confirming that the dynamic exploration time incurs minimal computational overhead.
To close, we note that CTD-Dyn uses fewer samples per update than MC-Dyn in gridworld\_20, but its performance is not better (Figure~\ref{fig:gridworld_online}).
This is likely because CTD-Dyn cannot efficiently make the bias in~\eqref{eq:weight_diff_bias} small due to a lack of low-dimensional features.

\subsection{Performance on environments without state exploration}
\begin{figure}[t]
    \begin{subfigure}{.49\textwidth}
    \centering
    \includegraphics[width=.75\linewidth]{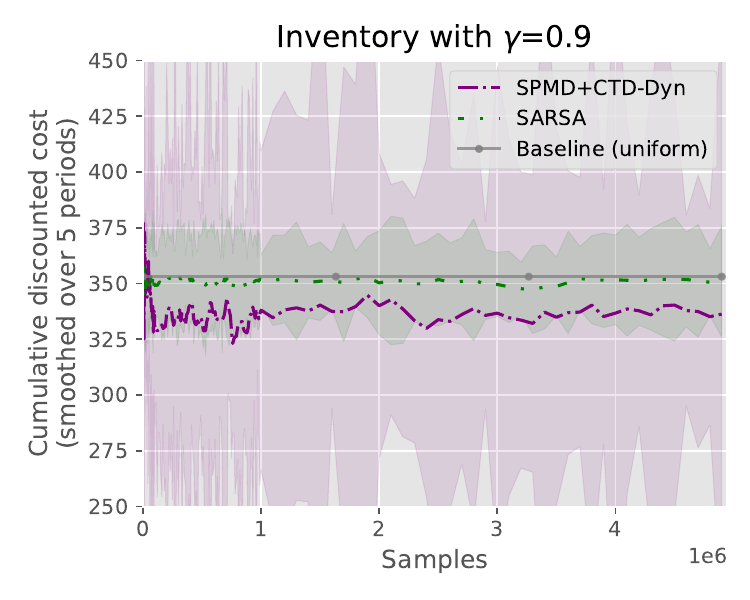}
    \end{subfigure}
    \begin{subfigure}{.49\textwidth}
    \centering
    \includegraphics[width=.75\linewidth]{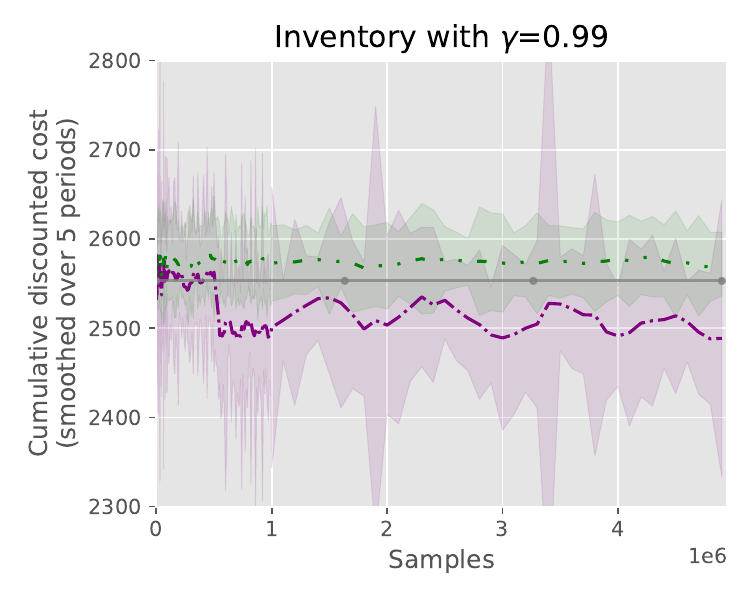}
    \end{subfigure}
    \\ 
    \begin{subfigure}{.49\textwidth}
    \centering
    \includegraphics[width=.75\linewidth]{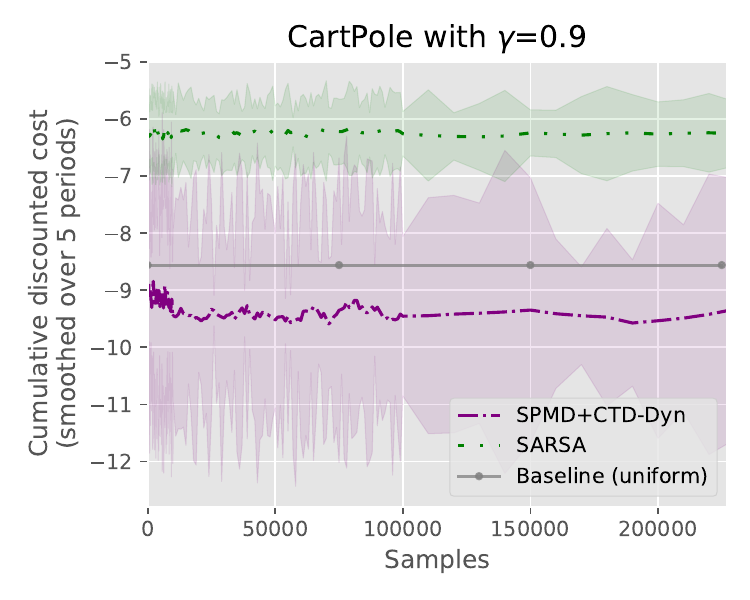}
    \end{subfigure}
    \begin{subfigure}{.49\textwidth}
    \centering
    \includegraphics[width=.75\linewidth]{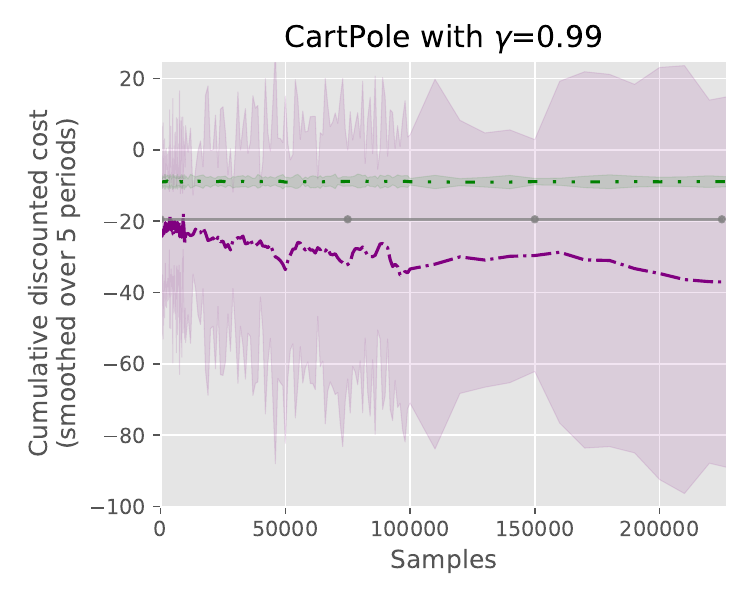}
    \end{subfigure}
    \caption{Estimated value function (lower is better) for Inventory and CartPole. 
    Same setup as Table~\ref{fig:garnet_online}.
    Methods that cannot finish one policy evaluation step can only match the baseline and are omitted.}
    \label{fig:gym_online}
\end{figure}

Unlike the previous problems, we now consider environments without state exploration.
The first is a multi-period inventory (newsvendor) problem with a single product subject to ordering, holding, and backlog costs as well as random demand~\cite{shapiro2021lectures}.
Because the problem has continuous states and actions, we discretized them to size $\vert \cS \vert = 81$ and $\vert \cA \vert= 10$.
The second is a classical robotics problem, CartPole, where a pole is to be balanced on a cart that is pushed left or right ($\vert \cA \vert=2$)~\cite{michie1968boxes}.
For this continuous state problem, all $n=4$ dimensions are discretized to size 20, yielding $\vert \cS \vert=20^4=160000$ states.
Although these problems lack convergence guarantees since Assumption~\ref{asmp:optimal_mixing} may be violated, our goal is simply to stress-test the methods on them.

Figure~\ref{fig:gym_online} shows the performance in these two problems, where visiting every state is practically infeasible.
SPMD+CTD-Dyn is the best-performing method on average. 
On the other hand, SARSA only matches or performs worse than the baseline.
The reason for the latter is that it under-explores when the $\epsilon$-exploration parameter is not sufficiently large, leading to over-exploitation and sub-optimal actions.
Both SPMD MC-Dyn and the estimation methods can only match the baseline performance because they cannot visit every state, which is required by their policy evaluation step.
For example, in Inventory, 3 states (corresponding to empty inventory) were unvisited after $5 \times 10^6$ samples. 
In CartPole, 155040 states were unvisited after $2.5 \times 10^5$ samples.
Conversely, SPMD+CTD-Dyn can update the policy without visiting every state due to the sampling distribution in~\eqref{eq:w_dist_defn}.

\section{Conclusion} \label{sec:conclusion}
We introduce a class of auto-explorative SPMD methods with the first algorithm-independent $\tilde{O}(\epsilon^{-2})$ sample complexity to solve online RL problems in a parameter-free, on-policy manner. 
We provide two dynamic exploration times: one for the tabular setting and one for the function approximation setting.
In the latter, we propose a new sampling distribution based on the discounted visitation distribution, which may be of independent interest since it covers a more general class of MDPs, is convenient to sample from, and has encouraging numerical performance.

To conclude, we discuss some extensions.
First, while auto-exploration can be applied to long-term average costs by solving a sequence of discounted problems~\cite{jin2021towards}, this will lead to sub-optimal dependence on $\epsilon$.
Improving this dependence is an interesting research question.
Second, it is interesting to consider how to apply auto-exploration with function approximation to general state space problems.
Third, we ask if one can still achieve auto-exploration over arbitrary Markov chains, even for those that lack the ergodicity assumption on the optimal policy (Assumption~\ref{asmp:optimal_mixing}).

\appendix

\section{Missing details from Section~\ref{sec:basics}} \label{sec:last_iterate}
Our goal is to prove the last-iterate convergence, that is, Proposition~\ref{prop:stronger_gap_converge_with_relaxed}.
The result had two parts.
The second part~\eqref{eq:dist_to_opt_at_t}, was already shown in~\cite{li2025policy}.
Therefore, we set out to prove the first part, which is~\eqref{eq:last_iter_opt_gap}.

For any time interval $[t_0,t_1]$, consider the sequence $\{L_t\}_{t=t_0}^{t_1}$ given recursively by
\begin{talign*} 
  L_t := \begin{cases}
    e^{-1} \cdot (t_1-t_0+1)^{-1} &: t=t_1 \\
    L_{t+1} + L_{t+1}^2 &: t_0 \leq t < t_1.
  \end{cases}
\end{talign*}
This sequence is identical to the one given in~\cite[Eq'n~(3.2)]{jain2021making}, and we omit its dependence of $t_0$,$t_1$ since it will be clear from context or explicitly provided.
Next, we define the centered optimality gap for a group of iterates.
For any constant $\varsigma \geq 0$, any state $s \in \cS$, and $l \in [t_0,t_1]$,
\begin{talign} \label{eq:A_func_def}
  &A(s,l,t_1) :=  \varsigma + \sum_{t=l}^{t_1} L_t \cdot (
    \\ & (1-\gamma)\eta_t[V^{\pi_t}(s) - V^{\pi_l}(s)] - \eta_t^2 \cdot (\hat{Q}^2 + M_h^2)). \nonumber
\end{talign}
The following is the key lemma for last-iterate convergence. 
It is similar to~\cite[Lemma 3.2]{jain2021making} but with two key differences: $\tilde{Q}^{\pi_t}$ is biased, and there is a separate objective value at each state $s \in \cS$.
Let us recall $\xi_t$ as the random vector used to generate the stochastic estimator $\tilde{Q}^{\pi_t}$. Denote the collection of random vectors $\xi_{[t-1]} := \{\xi_0,\xi_1,\ldots,\xi_{t-1}\}$ as the complete history up to time $t-1$ (inclusively). 
\begin{lemma} \label{lem:weighted_deviation}
  Let $p$ be any distribution over $\{t_0,\ldots,t_1\}$.
  If $\eta_t$ is a non-increasing stepsize and both~\eqref{eq:second_moment} and~\eqref{eq:weight_diff_bias} hold, then for any state $s \in \cS$ and $\theta \geq 0$, we have
    $\Pr\{ \sum_{l=t_0}^{t_1} p(l) \cdot A(s,l,t_1) > \theta + 2\eta_{t_0} \varsigma \} \leq \exp\{ -\frac{\theta}{8\eta_{t_0}^2 \hat{Q}^2} \}$.
\end{lemma}
\begin{proof}{Proof}
  For any state $q \in \cS$, define the bias and distance to optimality metric, respectively, 
      $\Delta_t(q) := \langle -[\tilde Q^{\pi_t}(q, \cdot) - Q^{\pi_t}(q,\cdot)], \pi_t(\cdot \vert q) - \pi_{t_0}(\cdot \vert q)\rangle$
      and
      $d_t(q) := \|\pi_t(\cdot \vert q) - \pi_{t_0}(\cdot \vert q)\|$.
  In view of~\eqref{eq:second_moment} and~\eqref{eq:weight_diff_bias}, one can show by the Cauchy-Schwarz inequality, $\mathbb{E}[\Delta_t(q) \vert \xi_{[t-1]}] \leq \varsigma \cdot d_t(q)$ and $\vert \Delta_t(q) \vert \leq 2\hat{Q}d_t(q)$.
  Then by applying Hoeffding's lemma, we have for any $\mu > 0$,
  \begin{talign*}
    &\mathbb{E}[\exp\{\mu \Delta_t(q)\} \vert \xi_{[t-1]}] 
    \\ &\leq \exp \{\mu d_t(q) \varsigma+ 2\mu^2 \hat{Q}^2 d_t(q)^2 \}.
  \end{talign*}
  \sloppy Fix $l \in [t_0,t_1]$ and define the exponentiated sums,
    $M_t(q) := \exp\{ - 2\lambda\eta_{t_0} \varsigma + \sum_{j=l}^t -2\lambda L_j \eta_j \Delta_j(q) - \lambda L_j^2 \cdot d_j(q)^2 \}$
    and
    $M_t(s,\pi) := \exp\{ - 2\lambda\eta_{t_0} \varsigma + \sum_{j=l}^t \mathbb{E}_{q \sim \kappa^{\pi_l}_s}[-2\lambda L_j \eta_j \Delta_j(q) - \lambda L_j^2 \cdot d_j(q)^2 ]\}$.
  Now, fixing the constant $\lambda = (8\eta_{t_0}^2 \hat{Q}^2)^{-1}$, one can show
  \begin{talign*}
    &\mathbb{E}[M_t(q) \vert \xi_{[t-1]}]  
    \\ &\leq
    M_{t-1}(q) \exp\{-\lambda L_t^2 d_t(q)^2 + 8\lambda^2L_t^2 \eta_t^2 \hat{Q}^2 d_t(q)^2 
    \\ & + 2\lambda L_t \eta_t \cdot d_t(q) \varsigma\} 
    \\ &\leq
    M_{t-1}(q) \exp\{2\lambda L_t \eta_t \cdot \varsigma\},
  \end{talign*}
  where the first inequality follows similarly to the proof of~\cite[Lemma 3.2]{jain2021making}, except there is an additional term for the bias, and the second inequality is by choice of $\lambda$ and the simple bound $d_t(q) \leq 2$.
  Using the fact $M_{l}(q) = \exp\{-2\eta_{l}\varsigma\}$, then recursively applying the above produces
  \begin{talign*}
    &\mathbb{E}[M_{t_1}(s,\pi)] 
    \\ &\leq
    \mathbb{E}_{q \sim \kappa^{\pi_l}_s} \mathbb{E}[M_{t_1}(q)]  
    \\ &\leq
    \exp\{-2\lambda \eta_{t_0}\varsigma\} \cdot \exp\{2\varsigma\lambda \sum_{t=l+1}^{t_1} L_t \eta_t\} 
    \\ &\leq
    \exp\{-2\lambda \eta_{t_0}\varsigma\} \cdot \exp\{2\varsigma\lambda \eta_{t_0}\} = 1, 
  \end{talign*}
  where the first inequality is by Jensen's inequality and the boundedness of $M_t(q)$ to exchange expectations, and the last inequality is by the fact $L_t \leq (t_1-t_0+1)^{-1}$ and the stepsize $\eta_t$ is non-increasing.

  Additionally, by combining the optimality conditions for the SPMD update~\eqref{eq:spmd_update} and Lemma~\ref{lem:performance_diff_deter} (see~\cite[Lemma 4]{lan2023policy} for a detailed derivation),
  \begin{talign*}
    &\eta_t \cdot (1-\gamma) \cdot [V^{\pi_t}(s) - V^{\pi_{l}}(s)]
    \\ &\leq
    \mathbb{E}_{q \sim \kappa^{\pi_{l}}_s}[D^{\pi_{l}}_{\pi_t}(q) - D^{\pi_{l}}_{\pi_{t+1}}(q) 
    \\ &+ 
    \eta_t^2(\hat{Q}^2 + M_h^2) - \eta_t\Delta_t(s)].
  \end{talign*}
  Multiplying both sides by $L_t$ and taking a telescopic sum,
  \begin{talign*}
    &\sum_{t=l}^{t_1} L_t\eta_t \cdot (1-\gamma) \cdot [V^{\pi_t}(s) - V^{\pi_{l}}(s)] \\
    &\leq
    \sum_{t=l}^{t_1} L_t \cdot \mathbb{E}_{q \sim \kappa^{\pi_{l}}_s}[D^{\pi_{l}}_{\pi_t}(q) - D^{\pi_{l}}_{\pi_{t+1}}(q) 
    \\& + \eta_t^2(\hat{Q}^2 + M_h^2) + \eta_t\Delta_t(s)] \\
    &\leq
    \sum_{t=l+1}^{t_1} [(L_t - L_{t-1}) \mathbb{E}_{q \sim \kappa^{\pi_{l}}_s}D^{\pi_{l}}_{\pi_t}(q) 
    \\ &+ L_t \cdot \eta_t^2(\hat{Q}^2 + M_h^2) - \mathbb{E}_{q \sim \kappa^{\pi_{l}}_s}L_t \cdot \eta_t\Delta_t(s)] \\
    &\leq
    \sum_{t=l+1}^{t_1} [-L_t^2 \cdot \mathbb{E}_{q \sim \kappa^{\pi_{l}}_s}d_t(q) + L_t \cdot \eta_t^2(\hat{Q}^2 + M_h^2) 
    \\ & - \mathbb{E}_{q \sim \kappa^{\pi_{l}}_s}L_t \cdot \eta_t\Delta_t(s)],
  \end{talign*}
  where the last line follows from $L_t-L_{t-1} = -L_t^2$ (from its definition) and the 1-strong convexity of the Bregman divergence.
  The rest of the proof follows similarly to~\cite[Lemma 3.2]{jain2021making}, so we skip it.
\end{proof}

\noindent The sum $A(s,l,t_1)$ from~\eqref{eq:A_func_def} depends on the group of policies $\pi_t$ from iterations $l,\ldots,t_1$ for an $l \geq t_0$. It remains to define the indicies $t_0,t_1$. 
Given $k \geq 1$ iterations, we form $E+1$ groups of iterates separated at checkpoint iterates $\ki{i}$ (identical to $T_i$ in~\cite[Equation (2.2)]{jain2021making}), where
$E := \lceil \log(k) \rceil$ and
\begin{talign*}
  \ki{i} &:= \begin{cases}
      k - \lceil k \cdot 2^{-i} \rceil &: 0 \leq i \leq E \\
      k &: i = E+1
  \end{cases}.
\end{talign*}
We refer to the iterates in $[\ki{i}+1,\ki{i+1}]$ as the $i$-th group of iterates.
These groups halve in size with increasing $i$, as seen by $\ki{i} - \ki{i-1} \approx k \cdot 2^{-i}$ for epoch $i \geq 1$. So the first group of iterates is the largest. Note that because $k \geq 4$, then $\ki{1} = k - \lceil k/2 \rceil \geq k/4+1$.

It turns out this group of iterates shares a similar objective function, as shown below.
We skip the proof, since it is nearly identical to~\cite[Lemma 3.6]{jain2021making}. The main technical difference is that we apply the previous Lemma~\ref{lem:weighted_deviation}, leading to an additional bias term $\varsigma$.
\begin{lemma} \label{lem:epoch_descent}
    For epoch $i \geq 0$, let $q^{(i)}$ be an arbitrary distribution over $\{\ki{i},\ldots,\ki{i+1}\}$.
    Define the probability distribution over $\{\ki{i+1}+1,\ki{i+2}\}$: $p^{(i+1)}(t) := \Psi^{-1} \cdot L_t\eta_t$, where $\Psi := \sum_{t=\ki{i+1}+1}^{\ki{i+2}} L_t \eta_t$ and $L_t$ is defined w.r.t.~$t_0=\ki{i}+1$ and $t_1=\ki{i+2}$. 
    Then for any $\delta \in (0,\frac{1}{2}]$, $s \in \cS$, and non-increasing stepsize $\eta_t$,
    \begin{talign*}
        &\Pr\{ \sum_{t=\ki{i+1}}^{\ki{i+2}-1} p^{(i+1)}(t) \cdot  V^{\pi_t}(s)  \\
        &\leq
        \sum_{t=\ki{i}}^{\ki{i+1}-1} q^{(i)}(t) \cdot V^{\pi_t}(s) 
        \\ &+ E_3(\varsigma,\delta,\eta_{\ki{i+2}}, \eta_{\ki{i}})
        \} \leq \delta,
    \end{talign*}
    where $E_3(\varsigma,\delta,\eta',\eta) := \frac{5e}{2(1-\gamma) \eta'}(9\eta^2 \hat{Q}^2\log\frac{1}{\delta} + 2\eta\varsigma)$.
\end{lemma}

The final supporting result we need is the average optimality gap for the first group, which is average-iterate convergence for the first $\ki{1} - \ki{0} \geq k/4$ iterates. The proof is skipped since it is nearly identical to the one found in~\cite[Theorem 5.3]{li2025policy}.
\begin{proposition} \label{prop:ergodic_iterate_direct}
    Under the same setup as Proposition~\ref{prop:stronger_gap_converge_with_relaxed}, then
    \begin{talign*}
        \mathrm{Pr}\big \{\exists s \in \cS ~:~
        &\frac{1}{\ki{1}-\ki{0}} \sum_{t=\ki{0}}^{\ki{1}-1}[V^{\pi_t}(s) - V^{\pi^*}]
        \\ &\leq
        E_4(k,\varsigma,\delta)
        \big \}
        \leq
        \delta
    \end{talign*}
    where $E_4(k,\varsigma,\delta) := \frac{4\alpha^{-1} \bar{D}_0 + \alpha (\hat{Q}^2 + M_h^2) + 8\hat{Q}\sqrt{\log(\vert \cS \vert/\delta)}}{(1-\gamma) \sqrt{k}} + \frac{4\varsigma}{1-\gamma}$.
\end{proposition}

We are now ready to prove the main result of this section.
\begin{proof}[Proof of~\eqref{eq:last_iter_opt_gap} from Proposition~\ref{prop:stronger_gap_converge_with_relaxed}]
    Choose $\delta = \delta'/(2(E+1) \vert \cS \vert)$ in Lemma~\ref{lem:epoch_descent} for all states $s \in \cS$ and epochs $i=0,\ldots,E = \lceil \log(k) \rceil$.
    By union bound, the probability the events hold over all epochs $i$ and states $s \in \cS$ is $\delta'/2$ . 
    Therefore, for the remainder of the proof we assume
    \begin{talign*}
        &\sum_{t=\ki{i+1}+1}^{\ki{i+2}} p^{(i+1)}(t) V^{\pi_t}(s)  
        \\ &\leq
        \sum_{t=\ki{i}+1}^{\ki{i+1}} q^{(i)}(t) \cdot V^{\pi_t}(s) 
        \\& + \frac{5e}{2(1-\gamma)}(9\eta \hat{Q}^2\log\frac{2(E+1) \vert \cS \vert}{\delta'} + 2\varsigma)
    \end{talign*}
    holds for all epochs $i$ and states $s$.
    Recall $q^{(i)}$ is a distribution with support over $\{\ki{i},\ldots,\ki{i+1}\}$ that can be arbitrarily defined.

    Recursively applying the above bound, and setting $q^{(0)}(\cdot) = (\ki{1}-\ki{0})^{-1}$ and $q^{(i)} = p^{(i-1)}$ for $i \geq 1$, we arrive at
    \begin{talign*}
        &\sum_{t=\ki{E}}^{\ki{E+1}} p^{(E)}(t) V^{\pi_t}(s) 
        \\ &\leq
        (\ki{1}-\ki{0})^{-1} \sum_{t=\ki{0}+1}^{\ki{1}} V^{\pi_t}(s) 
        \\ &+ 
        \frac{5(E+1)e}{2(1-\gamma)} (9\eta \hat{Q}^2 \log \frac{2(E+1)\vert \cS \vert}{\delta'} + 2 \varsigma),
    \end{talign*}
    which holds for all $s \in \cS$ simultaneously.

    We can simplify the above with the following two observations.
    First, recall $\ki{E} = k-1$ and $\ki{E+1} = k$. And since $p^{(E)}(t)$ is a probability distribution over $\{\ki{E}+1,\ki{E+1}\}$, then $p^{(E)}(\ki{E}) = 1$, so the left hand side is equal to $V^{\pi_k}(s)$.
    Second, $E+1 \leq \log(k)+2 = \log(4k)$.
    After adding $-V^{\pi^*}(s)$ to both sides, recalling $\ki{1}-\ki{0} \geq k/4$, and noting our choice in $\eta$, we get
    \begin{talign*}
        &V^{\pi_k}(s) - V^{\pi^*}(s)  
        \\
        &\leq
        \frac{4\alpha^{-1} \bar{D}_0 + \alpha (\hat{Q}^2 + M_h^2) + 8\hat{Q}\sqrt{\log(2\vert \cS \vert/\delta')}}{(1-\gamma) \sqrt{k}} + \frac{4\varsigma}{1-\gamma}
        \\ &+ 
        \frac{5e\log(4k) }{2(1-\gamma)} \cdot (\frac{9\alpha \hat{Q}^2\log \frac{2\vert \cS \vert \log(4k)}{\delta'} }{\sqrt k} + 2 \varsigma)
        \\
        &\leq
        \frac{4\alpha^{-1} [\bar{D}_0+1] + 67\alpha(\hat{Q}^2 + M_h^2)(\log \frac{4\vert \cS \vert}{\delta'})^2}{(1-\gamma)\sqrt{k}} 
        + \frac{18\varsigma\log(4k)}{1-\gamma}
    \end{talign*}
    where the first inequality holds with probability $1-\delta'/2$ (Proposition~\ref{prop:ergodic_iterate_direct} with $\delta = \delta'/2$), and the second inequality is by Young's inequality on the square root term and simplifying terms.
    Finally, by union bound, the total probability of success $1-2(\delta'/2) = 1-\delta'$.
\end{proof}

\section{Missing proof from Section~\ref{sec:implicit}} \label{sec:proofs_for_yanli_work}
\begin{proof}[Proof of Lemma~\ref{lem:tsallis_strong}]
    For notational convenience, define $D := \mathrm{rint}\Delta_{\vert \cA \vert}$.
    By Taylor's theorem~\cite[Theorem 2.1]{nocedal2006numerical}, it suffices to find a $\mu_\omega$ such that
        $\inf_{u \in D,~\|x\|_1=1} \{x^T \nabla^2 \omega(u) x \}
        \geq 
        \mu_\omega$.
    To show $\mu_\omega=1$ satisfies this inequality, observe that the Hessian of $\omega(u) = -\frac{1}{(1-p)p} \sum_{a \in \cA} [u(a)]^{p}$ at any point $u \in \mathrm{rint}\Delta_{\vert \cA \vert}$ is a diagonal matrix with entries $[\nabla^2 \omega(u)]_{ii} = \frac{(1-p)p}{(1-p)p} u_i^{p-2}$. 
    Therefore,
    \begin{talign*}
      &\inf_{u \in D, \|x\|_1=1} \{x^T\nabla^2 \omega(u) x\}
      \\ &=
      \inf_{u \in D} \min_{x \geq 0} \big\{ \sum_{i=1}^{\vert \cA \vert} x_i^2 u_i^{p-2} : \sum_{i=1}^{\vert \cA \vert} x_i = 1\big\}
      \\
      &=
      \inf_{u \in D} \big( \sum_{i=1}^{\vert \cA \vert} u_i^{2-p} \big)^{-1}
      \\
      &=
      \big( \sup_{u \in D} \sum_{i=1}^{\vert \cA \vert} u_i^{2-p} \big)^{-1}
      \\
      &\geq
      \big( \max_{u \in \Delta_{\vert \cA \vert}} \sum_{i=1}^{\vert \cA \vert} u_i^{2-p} \big)^{-1}
      =
      1,
    \end{talign*}
    \sloppy where the second line is because $x_i^* = u_i^{2-p}/(\sum_{i=1}^{\vert \cA \vert} u_i^{2-p})$ is optimal (by the KKT conditions), and the last line is the maximizer of a convex function over the simplex lies at an extreme point.
\end{proof}

\section{Missing details from Section~\ref{sec:spmd_linear}} 
Our results for Section~\ref{sec:spmd_linear} rest around the following nearly non-expansive result. 
Recall $W = \mathrm{diag}(w)$, where $w(z) = \kappa^{\piexplst}_{\sor}(s) \cdot \piexplact(a \vert s)$ from~\eqref{eq:w_dist_defn}, and $P^\pi((s,a),(s',a')) = P(s' \vert s,a) \pi(a' \vert s')$.
\begin{lemma} \label{lem:nearly_nonexpand}
    If $\epspi \in (0,\frac{1-\gamma}{2}]$, then $\|P^\pi u\|_W^2 \leq \gamma^{-3/2} \|u\|_W^2$ for all $u \in \mathbb{R}^{\vert \cZ \vert}$.
\end{lemma}
\begin{proof}
    Throughout the proof, we denote $z = (s,a)$ and $z' = (s',a')$. 
    Recall the $t$-step distribution under $\pi$, $P^{(t)}_\pi(\cdot,\cdot)$, defined below~\eqref{eq:visitation_measure}. 
    For any state $q \in \cS$,
    \begin{talign*} 
        &\sum_{z \in \cZ} [\kappa^{\piexpl}_{q}(s) \piexpl(a \vert s)] P^\pi(z,z')
        \\
        &=
        \sum_{z \in \cZ} [\kappa^{\piexpl}_{q}(s) \piexpl(a \vert s) P(s' \vert s,a)] \pi(a' \vert s')
        \\
        &=
        (1-\gamma) \sum_{t=0}^\infty \gamma^t P^{(t+1)}_{\piexpl}(q,s') \pi(a' \vert s') 
        \\
        &\leq
        \gamma^{-1} \cdot (1-\gamma) \sum_{t=0}^\infty \gamma^t P^{(t)}_{\piexpl}(q,s') \pi(a' \vert s') 
        \\
        &\leq
        \gamma^{-3/2} \cdot \kappa^{\piexpl}_q(s') \piexpl(a' \vert s'), \quad \forall z,z \in \cZ,
    \end{talign*}
    where we used the definition of $\kappa^\pi_q(s) = (1-\gamma)\sum_{t=0}^\infty \gamma^t P_\pi^{(t)}(q,s)$, and the last line 
    is by $\pi(a' \vert s') \leq (1-\epspi)^{-1} \tpi(a' \vert s') \leq \gamma^{-1/2} \tpi(a' \vert s')$, which can be deduced from our construction $\piexplact(a \vert s) = (1-\epsact)\pi(a \vert s) + \frac{\epsact}{\vert \cA \vert}$ from~\eqref{eq:tpi_defn} and the assumption $\epspi \leq (1-\gamma)/2$.
    From the above inequality and recalling $w(z) = \kappa^{\piexpl}_{\sor}(s) \cdot \piexpl(a \vert s)$, we conclude 
    \begin{talign*}
        &\|P^\pi u\|_W^2
        \\ &\leq
        \sum_{z \in \cZ} [\kappa^{\piexpl}_{\sor}(s) \tpi(a \vert s)] (\sum_{z' \in \cZ} P^\pi(z,z') u(z')^2)
        \\
        &=
        \sum_{z' \in \cZ} \big[\sum_{z \in \cZ} [\kappa^{\piexpl}_{\sor}(s) \tpi(a \vert s)] P^\pi(z,z')\big] u(z')^2
        \\
        &\leq
        \gamma^{-3/2} \sum_{z' \in \cZ} [\kappa^{\piexpl}_{\sor}(s') \tpi(a' \vert s')] u(z')^2
        \\ &=
        \gamma^{-3/2} \|u\|_W^2,
    \end{talign*}
    where the first inequality applied Jensen's inequality.
\end{proof}

\subsection{Missing proofs from subsection~\ref{sec:F_new}} \label{sec:missing_F_new_ec} 

\begin{proof}[Proof of Proposition~\ref{prop:proj_bellman_solvable}]
    \sloppy Note that the condition $\min_{s \in \cS} {\kappa}^{\tpi}_{\sor}(s) > 0$ and positivity of $\epsact$ ensure $\min_{z \in\cZ} w(z) > 0$. 
    In view of the assumption $\Phi$ is full column rank, then the projection operator $\Pi := \Phi[\Phi^TW\Phi]^{-1}\Phi^TW$ is well-defined. Here, $\Pi$ is defined s.t.~$\Pi u = \mathrm{argmin}_{v \in \mathrm{span}(\Phi)} \|u-v\|_W$.

    Proof for solvability of~\eqref{eq:proj_bellman_eqn_og} is similar to~\cite[Theorem 6.1.1]{puterman2014markov}, where we must show $\sigma(\gamma \Pi P^\pi) < 1$. Here, $\sigma(\cdot)$ is the spectral radius. Since $\|\cdot\|_W$ is a sub-multiplicative norm, we have
    \begin{talign*}
        \sigma(\gamma \Pi P^\pi)
        &\leq
        \gamma \|\Pi P^\pi\|_W
        \\ &\leq
        \gamma \|\Pi\|_W \|P^\pi\|_W
        \leq
        \gamma^{1/4} < 1,
    \end{talign*}
    where the penultimate inequality is since projection is non-expansive and Lemma~\ref{lem:nearly_nonexpand}.

    To bound the norm of the solution $\Qstr$, we use the fact the solution to~\eqref{eq:proj_bellman_eqn_og} satisfies the Neumann series, $\Qstr = (I - \gamma \Pi P^\pi)^{-1} \Pi c = \sum_{t=0}^\infty (\gamma \Pi P^\pi)^t \Pi c$.
    Therefore,
    \begin{talign*}
        \| \Qstr \|_W
        &\leq
        \sum_{t=0}^\infty \|\gamma \Pi P^\pi\|_W^t \|\Pi c\|_W
        \\ &\leq
        \sum_{t=0}^\infty \gamma^{t/4} 
        =
        \frac{1}{1-\gamma^{1/4}} \leq \frac{4}{1-\gamma},
    \end{talign*}
    where the second inequality is by our previous derivations and $\|c\|_W \leq \|c\|_\infty \leq 1$, since all costs are bounded by 1.
\end{proof}

\begin{proof}[Proof of Lemma~\ref{lem:monotone_with_disc_visit}]
    The proof is identical to~\cite[Lemma SM2.3]{li2023accelerated} combined with Lemma~\ref{lem:nearly_nonexpand}.
\end{proof}

\begin{proof}[Proof of Lemma~\ref{lem:F_lipschitz}]
    The proof is similar to~\cite[Lemma A.5]{li2024stochastic} combined with Lemma~\ref{lem:nearly_nonexpand}.
\end{proof}

\subsection{Missing proofs from subsection~\ref{sec:sto_hF_defn}} \label{sec:missing_sto_hF_defn}
\begin{proof}[Proof of Lemma~\ref{lem:F_bias}]
    Recall the sampled state $s_{t'}$ from the three-step construction of $\hF(\cdot)$ in subsection~\ref{sec:sto_hF_defn}, where the random time $t' := \tau^{\piexplst}(\sor) + \tilde t$ depends on the origin state's first hitting time under policy $\piexplst$, $\tau^{\piexplst}(\sor)$, and $\tilde t \sim \mathrm{Geo}(1-\gamma)$ is an iid random variable.
    Also recall that we only sample $s_{t'}$ if $\tilde t < m$ for a user-defined max trajectory length $m$.
    With this construction, observe that for any $s, \sor \in \cS$, the random state $s_{t'}$ is distributed according to (we write $s_{\tau^{\piexplst}(\sor)} = s_{t_0}$ for notational convenience),
    \begin{talign*}
        &\mathrm{Pr}\{s_{t'} = s \vert s_{\tau^{\piexplst}(\sor)} = \sor, \tilde t < m\}
        \\ &=
        \frac{1}{\mathrm{Pr}\{\tilde t < m\}} \sum_{t=0}^{m-1} \mathrm{Pr}\{\tilde t = t\} \mathrm{Pr}\{s_{t'}=s \vert s_{t_0} = \sor\}
        \\ &=
        \frac{1-\gamma}{\mathrm{Pr}\{\tilde t < m\}} \sum_{t=0}^{m-1} \gamma^t P^{(t)}_{\tpi}(\sor, s)
        \\
        &=:
        \kappa_{\sor, [m]}^{\tpi}(s), 
    \end{talign*}
    where the first equality is by independence of $\tilde t$ and the law of total probability, and the second is because the Markov chain is time homogeneous and we denote $P^{(t)}_{\tpi}(\sor,s) = \mathrm{Pr}\{s_t=s \vert s_{0} = \sor\}$.
    In view of $\hF(\theta) = \mathbf{0}$ when $\tilde t \geq m$, we can show for any $\theta \in \mathbb{R}^d$,
    \begin{talign*}
        &\mathbb{E}[\hF(\theta) \vert \tau^{\piexplst}(\sor)] 
        \\ &= 
        \mathrm{Pr}\{\tilde t < m\} \cdot \Phi^T W^{[m]}(\Phi \theta - c - \gamma P^{\tpi} \Phi \theta),
    \end{talign*} 
    where $w^{[m]}(z) := \kappa^{\tpi}_{\sor, [m]}(s) \cdot \piexplact(a \vert s)$ and $W^{[m]} := \mathrm{diag}(w^{[m]})$.
    With the state-action distribution $w(z) = \kappa_{\sor}^{\tpi}(s) \cdot \piexplact(a \vert s)$, we can confirm that
    \begin{talign} \label{eq:w_m_truncation_err}
        &\|w - \mathrm{Pr}\{\tilde t < m\} \cdot w^{[m]}\|_2 \nonumber
        \\ &\leq 
        \sqrt{\vert \cZ \vert}\|\kappa_{\sor}^{\tpi} - \mathrm{Pr}\{\tilde t < m\} \cdot \kappa_{\sor,[m]}^{\tpi} \|_\infty  \nonumber
        \\ &=
        \sqrt{\vert \cZ \vert} \max_{s \in \cS}\big\{ (1-\gamma)\sum_{t=m}^\infty \gamma^t P^{(t)}_{\tpi}(\sor, s) \big\}  \nonumber
        \\ &\leq
        \sqrt{\vert \cZ \vert} \cdot \gamma^m.
    \end{talign}
    Writing $W = \mathrm{diag}(w)$, then using an argument similar to~\cite[Lemma A.3]{li2024stochastic}, we have
    \begin{talign*}
        &\|\mathbb{E}[\hF(\theta) - \hF(\theta')\vert \tau^{\piexplst}(\sor)]- (F(\theta) - F(\theta'))\|_2
        \\ &\leq
        \|\Phi\|_2 \|W - \mathrm{Pr}\{\tilde t < m\} \cdot W^{[m]}\|_2 
        \\ & \cdot 
        \|I- \gamma P^{\pi}\|_2 \|\Phi (\theta-\theta')\|_2
        \\ &\leq
        \|\Phi\|_2^2\sqrt{\vert \cZ \vert} \gamma^m [1 + \gamma \|P^\pi\|_2] \|\theta-\theta'\|_2
        \\ &\leq
        L \vert \cZ \vert^2 \cdot \gamma^m \|\theta-\theta'\|_2,
    \end{talign*}
    where the second inequality is partially by~\eqref{eq:w_m_truncation_err}, and the last inequality is because $P^\pi$ is a row-stochastic matrix with $\vert \cZ \vert$ rows and $\|\Phi\|_2^2 \leq \|\Phi\|_F^2 \leq \vert \cZ \vert \max_{z \in \cZ}\|\phi(z)\|_2^2 = L\vert \cZ \vert/2$.

    The second inequality in the lemma can be shown similarly as above, where the main difference is term $\|\theta - \theta\|_2$ in the derivation above is replaced by $\|\otheta\|_2$, with an upper bound of $\oTheta$ from~\eqref{eq:otheta_bnd}.
\end{proof}

\subsection{Missing proofs from subsection~\ref{sec:anytime_ctd}} \label{sec:missing_anytime_ctd}
\begin{proof}[Proof of Lemma~\ref{lem:ctd_abs_err}]
    Define $\delta_\ell := \hF(\theta_\ell) - F(\theta_\ell)$ and $\Delta_\ell := \langle \delta_\ell, \theta_\ell - \otheta \rangle$ for any $\ell \geq 0$. 
    Then 
    \begin{talign*}
        &(1 + 2\mu\iota_\ell - 2\iota_\ell^2L^2)\|\theta_{\ell+1}-\otheta\|_2^2
        \\ &\leq
        \|\theta_\ell - \otheta\|_2^2 + \iota_\ell^2\|\delta_\ell\|_2^2 - \iota_\ell \Delta_\ell
        \\ &\leq
        [1 + \iota_\ell C_1 \gamma^m + 3\iota_\ell^2 C_2]\|\theta_\ell - \otheta\|_2^2
        + \iota_\ell^2\sigma^2 
        \\ &\hspace{10pt}
        - [(\iota_\ell C_1 \gamma^m + C_2\iota_\ell^2)\|\theta_\ell - \otheta\|_2^2 + \iota_\ell^2 + \iota_\ell \Delta_\ell],
    \end{talign*}
    where the first inequality was shown in the proof for~\cite[Theorem 3.8]{kotsalis2022simple}, the second inequality is with the help of Lemma~\ref{lem:F_bias} and Lemma~\ref{lem:F_lighttail} and recalling $\sigma^2 = 2\sigma_2^2+1$.

    In view of the above inequality, the rest of the proof follows~\cite[Theorem 3.8]{kotsalis2022simple}, except we handle the error term $\iota_\ell \Delta_\ell$ differently because we seek a high probability bound.
    To do so, consider an arbitrary $k \in [N]$, and suppose we have already shown for any $\ell \in [k]$,
    \begin{talign*}
        \Pr\{ \|\theta_\ell - \otheta\|_2^2 > \mathcal{B}(\ell,\delta)  \} \leq \delta/N,
    \end{talign*}
    which can be done using mathematical induction.
    Note that $\mathcal{B}(\ell, \delta) \leq R_{[k]}^2$ for all $\ell \in [N]$.
    For simplicity, we simply write $R_{[k]}$ as $R$ for the remainder of the proof.
    We also assume the complement of the above event holds for all $\ell \in [k]$ for the rest of the proof.

    Now, define a weighting parameter $\alpha_\ell = (\ell + \ell_0+1)(\ell+\ell_0+2)$.
    Because $\Delta_\ell - \mathbb{E}[\Delta_\ell]$ is a martingale difference sequence, then we have with probability $1-\delta/N$,
    \begin{talign*}
        &\sum_{\ell=0}^{k-1} \alpha_\ell \iota_\ell \Delta_\ell
        \\ &=
        \sum_{\ell=0}^{k-1} \alpha_\ell \iota_\ell \mathbb{E}[\Delta_\ell]
        + \sum_{\ell=0}^{k-1} \alpha_\ell \iota_\ell (\Delta_\ell - \mathbb{E}[\Delta_\ell])
        \\ &\geq
        \sum_{\ell=0}^{k-1} -\alpha_\ell \iota_\ell [C_1 \gamma^{m_\ell}\|\theta_\ell-\otheta\|_2^2 + \sigma_1 \gamma^{m_\ell} \|\theta_\ell - \otheta\|_2]
        \\
        &\hspace{10pt}
        - \sqrt{2\log(\frac{4N}{\delta})\sum_{\ell=0}^{k-1} \alpha_\ell^2 \iota^2[\sqrt{C_2}\|\theta_\ell - \otheta\| + \sigma_2]^2}
        \\ &\geq
        \sum_{\ell=0}^{k-1} -\alpha_\ell \iota_\ell [(C_1 \gamma^{m_\ell} + C_2 \iota_\ell)\|\theta_\ell-\otheta\|_2^2 + \iota_\ell]
        \\ &\hspace{10pt}
        - \frac{16}{\mu}\sqrt{\log(\frac{4N}{\delta}) (k + \ell_0)^3 C_2R^2},
    \end{talign*}
    where the first inequality is by Lemma~\ref{lem:F_bias}, the Azuma-Hoeffding inequality, and Lemma~\ref{lem:F_lighttail}. 
    The second inequality is by $2ab \leq a^2 +  b^2$ for any $a,b \in \mathbb{R}$ and $\gamma^{m_\ell} \leq \sqrt{C_2}\iota_\ell/\sigma_1$ since $m_\ell \geq \unm_\ell$, followed by the simplification $\sum_{\ell=0}^{k-1} \alpha_\ell^2 \iota_\ell^2 \leq \frac{4}{\mu^2} \sum_{\ell=0}^{k-1} \frac{16(\ell+\ell_0)^4}{(\ell+\ell_0)^2} \leq \frac{64}{2\mu^2}(k+\ell_0)^3$ and $\sigma_2^2 = C_2 \oTheta^2 \leq R^2$.
    The rest of the proof follows similarly to that of~\cite[Theorem 3.8]{kotsalis2022simple}.
\end{proof}

\begin{proof}[Proof of Lemma~\ref{lem:ctd_bias}]
    Our proof follows closely to~\cite[Lemma 18]{lan2023policy}.
    First, we will derive an absolute bound on the bias of $\hF(\cdot)$. 
    By applying the integral identity~\cite[Lemma 1.2.1]{vershynin2018high} and Lemma~\ref{lem:ctd_abs_err}, we find,
    \begin{talign*}
        (\mathbb{E}\|\theta_\ell - \otheta\|)^2
        \leq
        \mathbb{E}\|\theta_\ell - \otheta\|_2^2 
        \leq 
        \mathcal{B}(\ell,1) 
        \leq 
        R_{[1]}^2 =: R^2,
    \end{talign*}
    where the first inequality is by Jensen's inequality and $\mathcal{B}(\cdot,\cdot)$ is from Lemma~\ref{lem:ctd_abs_err}.
    We can then deduce
    \begin{talign*}
        &\|\mathbb{E}[\hF(\theta_\ell) \vert \tau^{\piexplst}(\sor)] - F(\theta_\ell)\|_2 
        \\ &\leq
        [C_1 \mathbb{E}\|\theta_\ell - \otheta\|_2 + \sigma] \gamma^m 
        \\ &\leq
        [C_1R + \sigma] \gamma^m 
        \leq
        2C_1R \gamma^m,
    \end{talign*}
    where the first inequality is by Lemma~\ref{lem:F_bias}, the second is by the Jensen's inequality, and the third by $\sigma \leq R$ and recalling $C_1 \geq 1$.
    Using the above result and~\eqref{eq:otheta_bnd} within~\cite[Lemma 18]{lan2023policy} guarantees that
    \begin{talign*}
        \|\mathbb{E}\theta_{\ell} - \otheta\|_2^2
        &\leq
        \frac{(\ell_0-1)(\ell_0-2)(\ell_0-3)\|\mathbb{E}\theta_0-\otheta\|_2^2}{(\ell+\ell_0-1)(\ell+\ell_0-2)(\ell+\ell_0-3)}
        \\ &+
        \frac{8C_1R^2\gamma^m}{3\mu} + \frac{2C_1^2R^2\gamma^{2m}}{\mu^2}.
    \end{talign*}
    In particular, with $\ell \geq \ell_0$, we get 
    $\|\mathbb{E}\theta_{\ell} - \otheta\|_2^2 \leq \frac{\|\mathbb{E}\theta_0-\otheta\|_2^2}{8} + \frac{8_1CR^2\gamma^m}{3\mu} + \frac{C_1^2R^2\gamma^{2m}}{\mu^2}$, and we can apply this bound recursively to attain our linear rate of convergence (with respect to $\ell$).
\end{proof}

\subsection{Missing proof from subsection~\ref{sec:auto_linfun}} \label{sec:missing_auto_linfun}

\begin{proof}[Proof of Corollary~\ref{cor:kappa_lb}]
    Let $\unpi := \min_{t} \min_{s,a : \pi^*(a \vert s) =1} \{\tpi_t(a \vert s) \}$. 
    By our choice of $\epspi$-greedy exploration in the exploration policy $\tpi_t$ from~\eqref{eq:tpi_defn}, where we set $\epspi = \frac{1-\gamma}{2}$ in Theorem~\ref{thm:pmd_ctd}, we ensure that $\unpi > \frac{1-\gamma}{2 \vert \cA \vert}$.
    Then we have
    \begin{talign*}
        \kappa^{\tpi_t}_{\sor}(s)
        &\geq
        (1-\gamma) \gamma^{\ob} P^{(\ob)}_{\tpi_t}(\sor,s)
        \\
        &\geq
        (1-\gamma) \frac{(\gamma \unpi)^{\ob} \unu^*}{2}
        \geq
        \frac{(1-\gamma)\unu^*}{2}\big( \frac{(1-\gamma)\gamma}{2\vert \cA \vert}\big)^{\ob},
    \end{talign*}
    where the first inequality is by definition of $\kappa^{\pi}_{\sor}(s)$, the second inequality is by~\eqref{eq:P_pi_t_relation_to_nu} (which uses Assumption~\ref{asmp:optimal_mixing}), and the third is by our bound on $\unpi$.
\end{proof}

\setlength{\bibsep}{1pt}
{\small
\bibliographystyle{abbrvnat} 
\bibliography{biblio}
}

\end{document}